%% file: main.tex
\documentclass[letterpaper]{article} 
\usepackage[preprint]{aaai2027} 
\usepackage[hyphens]{url} 
\usepackage{graphicx} 
\usepackage{natbib} 
\usepackage{caption} 
\usepackage{amsmath}
\usepackage{amssymb}
\usepackage{amsthm}
\usepackage{algorithm}
\usepackage{algorithmic}
\usepackage{booktabs}
\usepackage{array}
\usepackage{xcolor}
\usepackage{microtype}
\newcommand{\model}{\textsc{ALPHABET}}
\newcommand{\R}{\mathbb{R}}
\newcommand{\C}{\mathbb{C}}
\newcommand{\E}{\mathbb{E}}

\newcommand{\BenchmarkTaskCount}{82}

\newcommand{\BenchmarkCompleteTaskCount}{82}

\newcommand{\BenchmarkUCRTaskCount}{30}
\newcommand{\BenchmarkGeneralTaskCount}{21}
\newcommand{\BenchmarkUEAFaultTaskCount}{31}

\newcolumntype{L}[1]{>{\raggedright\arraybackslash}p{#1}}
\newtheoremstyle{pacstatement}{6pt}{6pt}{\normalfont}{}
  {\bfseries}{.}{0.5em}{}
\theoremstyle{pacstatement}
\newtheorem{theorem}{Theorem}
\newtheorem{proposition}{Proposition}
\newtheorem{corollary}{Corollary}
\newtheorem{lemma}{Lemma}

\title{ALPHABET: A Laplace-Pole History Aggregator \\ with Banked Exponential Transport}
\author{Daehwa Ko, JaeHyeon Kim, Oh Seong Kwon, and Jay Hoon Jung}
\affiliations{Department of Artificial Intelligence, Korea Aerospace University, Goyang, Republic of Korea\\
daehwa001210@gmail.com, kjh990127@kau.kr, kos\_25@kau.kr, jhjung@kau.ac.kr}

\begin{document}
\maketitle

\begin{abstract}


  
Can a sequence model remain competitive with only a few thousand parameters
and an explicitly auditable prediction interface? We introduce \model{}, a
compact linear-time model that compresses temporal history into stable
complex pole modes: a direct bank synthesizes its modal states back into the
feature trajectory, an independent cascaded bank analyzes the transformed
trajectory without resynthesis, and an affine head reads only modal energies
and lag moments from both banks.
We characterize the temporal information this descriptor retains: for a
stationary, fully observed feature process, each mode energy is a
frequency-localized measurement of the second-order spectrum, the continuum
of such measurements identifies the spectrum, and almost every mode separates
any fixed finite set of spectrally distinct classes. On a Gaussian control
with matched low-lag statistics, the learned descriptor approaches the Bayes
oracle where raw autocovariances remain at chance.
  
Across the fixed \BenchmarkTaskCount-task registry, \model{} attains mean
rank $3.97$ in the complete ten-family comparison. At the common-width
\(D=64\) runtime anchor, its 6,437 parameters deliver \(5.02\times\) faster
inference and \(3.93\times\) faster complete training steps than the nine
baselines on average.
\end{abstract}

\section{Introduction}
\label{sec:introduction}

A sequence model turns temporal history into a state that a predictor can use.
Modern architectures have made that state increasingly powerful: gated recurrence propagates hidden states token by token, attention constructs
contextual representations from pairwise interactions, and structured or
selective state-space models evolve states through learned dynamics
\cite{hochreiter1997lstm,cho2014gru,vaswani2017attention,gu2022s4,gu2023mamba}.
Yet competitive temporal prediction should not require a large, slow, and
opaque backbone---and opacity has a concrete cost: it is rarely stated which
temporal information the retained state preserves. Without such an account,
the adequacy of a compact state is assessed mainly through downstream
accuracy, and it remains unclear where its guarantees end.

We therefore study how much discriminative temporal structure a deliberately
compact, stable state can preserve. Classical model reduction asks how few dynamical degrees
of freedom approximate a known input--output system
\cite{moore1981pca,antoulas2005approximation,benner2015modelreduction}.
We pose the analogous question for prediction: a sequence encoder summarizes its
recurrent states into a compact prediction interface, aiming not to
reconstruct every input token but to expose a small set of temporal
measurements whose meaning, stability, and empirical utility can all be
checked.

\begin{figure*}[t!]
\centering
\includegraphics[width=.98\textwidth]{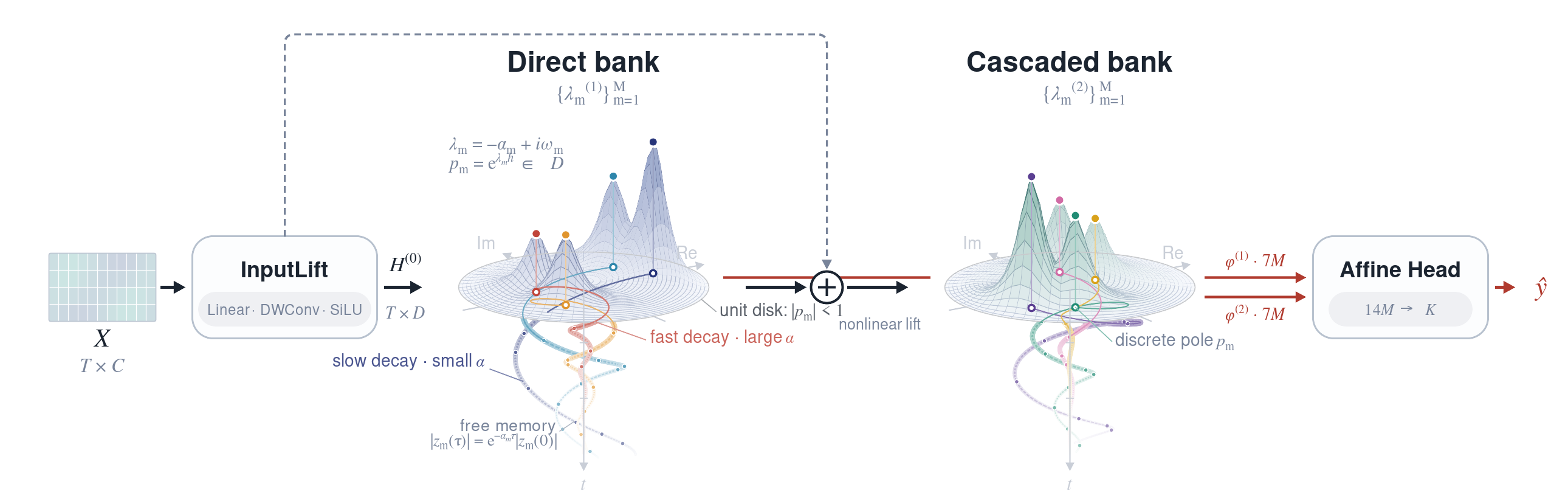}
\caption{\model{} architecture. The direct bank produces a $7{\times}M$
modal descriptor (seven coordinates for each of $M$ modes) and synthesizes
$H^{(1)}$; the independent cascaded bank returns a second $7{\times}M$
descriptor without resynthesis. Their concatenation forms the
$14{\times}M$-dimensional affine prediction interface.}
\label{fig:architecture}
\end{figure*}

\model{} uses two banks of complex recurrent modes parameterized by stable
Laplace poles, with each pole specifying the decay and oscillation of its
mode. Pole locations may be trained, but the spectral coverage of the bank
and its modal moment readout are the essential construction. An input lift
first forms a real-valued feature trajectory. The direct pole bank then
accumulates projected features through interval-aware modal dynamics,
projects the resulting complex states back into the real feature space, and
adds them to the feature trajectory. An independent cascaded pole bank
applies a second set of modal dynamics to the resulting, nonlinearly
transformed trajectory---a feature process shaped by direct-bank synthesis
rather than the original lifted trajectory---without resynthesis. Both banks
return mode-wise energies and complex-lag moments, whose concatenation forms a fixed affine prediction interface.
  
The representation also has a precise population interpretation: for a
stationary feature process, each pole energy is a directional,
Poisson-localized measurement of its matrix-valued spectrum.
The continuum of admissible measurements uniquely identifies that spectrum, and almost
every admissible mode separates any fixed finite collection of distinct class spectra.
Whenever a finite bank contains such a mode, there exists an affine head that
exactly separates the resulting population prototypes. Under ergodicity, the
corresponding empirical decision becomes consistent as sequence length grows.
Independently, stable normalized dynamics bound the realized descriptor
uniformly in sequence length. This analysis characterizes the information
available to the model for a fixed feature process. Because the head is
affine in these modal statistics, auditability is concrete: each predicted
margin splits exactly into per-pole terms, and removing the top-attributed
poles degrades accuracy far beyond random removal.
  
The empirical study asks whether this structured compression retains useful
predictive information in practice. Our contributions are threefold. First, we introduce a linear-time sequence model with a direct pole bank, an
independent cascaded pole bank, and a fixed affine prediction interface that
reads only mode-wise energies and lag moments from both banks. Second, for a fixed, fully observed, equal-step,
second-order stationary feature process, we characterize mode energies as
frequency-localized spectral measurements, show that the continuum family
identifies the second-order spectrum, and establish generic population
separation for finite collections of spectrally distinct classes. Separately,
stable interval-aware dynamics and normalized excitation yield
sequence-length-independent bounds for realized finite-bank prediction
interfaces. Third, we test whether end-to-end learning realizes useful
finite-mode representations through a fixed \BenchmarkTaskCount-task registry
and a complete \BenchmarkCompleteTaskCount-task comparison against nine
trainable sequence-model families, a moment-matched spectral diagnostic,
and boundary and systems audits across data regimes, corruptions, sequence
shapes, and execution phases.

\section{Related Work}
\label{sec:related}
\paragraph{Temporal state construction.}
Recurrent models such as LSTMs and GRUs propagate gated hidden states through
time, while Transformers construct contextual representations through
attention 
\cite{hochreiter1997lstm,cho2014gru, vaswani2017attention}.
Structured and diagonal state-space models, including HiPPO, S4, DSS,
S4D, S5, LRUs, and Mamba, instead represent temporal history through learned linear or selective dynamics 
\cite{gu2020hippo,gu2022s4,gupta2022dss,gu2022s4d,smith2023s5, orvieto2023lru,gu2023mamba}.
Models for irregular observations, such as GRU-D, ODE-RNN, Latent ODE, and Neural CDE, additionally incorporate elapsed
time through explicit decay, continuous-time evolution, or controlled differential equations 
\cite{che2018grud,rubanova2019latentode,kidger2020neuralcde}.
\model{} shares the linear-time recurrent form of diagonal state-space models
but differs in its prediction interface: the head never sees the state
trajectory itself, only mode-wise energies and complex-lag moments from the
two banks.
  
\paragraph{Structured spectral representations.}
Balanced reduction, dynamic mode decomposition, and Laguerre or Kautz bases
seek compact dynamical descriptions
\cite{moore1981pca,antoulas2005approximation,schmid2010dmd,
wahlberg1996laguerre,ninness1997orthonormal}. Filter-output covariance and
state covariance are classical spectral interpolation objects
\cite{byrnes2000three,georgiou2002statecovariance}; wavelet variances, random
autoregressive filter banks, recurrent covariance features, and complex-pole
arrays provide related localized summaries
\cite{li2002waveletspectrum,farahmand2017rpfb,gilson2020covariance,
pintoorellana2021cofre}. Dedicated time-series models---random convolutional
pipelines such as ROCKET and MiniRocket
\cite{dempster2020rocket,dempster2021minirocket} and specialized
architectures such as InceptionTime and TimesNet
\cite{fawaz2020inceptiontime,wu2023timesnet}---achieve strong accuracy on
the same archives; our comparison instead spans generic trainable sequence
backbones under one shared protocol covering classification, forecasting,
and irregular sampling, so we view them as complementary rather than
competing baselines. Our use of learned poles is supervised, but the main
distinction is not the filter family alone: we connect the implemented modal
statistics to a characteristic continuum embedding, generic affine
separation, and a bounded prediction interface.
  
\section{ALPHABET}
\paragraph{Notation and shapes.}
Suppressing the batch dimension, let
\(X=[x_1,\ldots,x_T]^\top\in\R^{T\times C}\),
\(h\in\R_{\geq0}^{T}\), \(o\in\{0,1\}^{T\times C}\), and
\(w\in\{0,1\}^{T}\). Here, \(h_t\) is the elapsed time from step
\(t-1\) to \(t\), while \(o_{t,c}\) and \(w_t\) indicate channel
availability and token validity, respectively; a token-level scalar
mask is broadcast across channels. We define the event-level
availability
\(\widetilde o_t=\max_c o_{t,c}\).
The mask \(o\) removes unavailable raw entries, its reduction
\(\widetilde o\) gates external excitation, and \(w\) excludes padded
or invalid tokens from local features and moment estimates.

Let \(D\), \(M\), and \(K\) denote the real feature width, the number
of complex modes per bank, and the output dimension, respectively,
and let \(r=1,2\) index the direct and cascaded banks, respectively.
\[
\begin{gathered}
H^{(r-1)},U^{(r)}\in\R^{T\times D},\quad
Z^{(r)}\in\C^{T\times M},\quad W_X\in\R^{D\times C},\\
A_r\in\R^{D\times2M},\quad
A_r^\top A_r=I_{2M},\quad 2M\leq D.
\end{gathered}
\]
The paired coordinates of \(A_r\) form the real and imaginary parts
of the \(M\) complex excitations. The finite set
\(\mathcal T\subset\mathbb N_{>0}\) contains the token lags used by the
modal readout, and \(s_1,\gamma_1\in\R^D\) parameterize the direct-bank
residual and feedthrough terms. We write
\(\phi^{(r)}(X)\), \(g(X)\), and \(f_\Theta(X)\), suppressing their
dependence on \(h,o,w\).

\subsection{Stable Laplace-pole dynamics}
Fix a bank \(r\in\{1,2\}\) and suppress its index. For each mode
\(m\in\{1,\ldots,M\}\), let
\[
\lambda_m=-\alpha_m+i\omega_m,\qquad \alpha_m>0,
\]
where \(\alpha_m\) and \(\omega_m\) are the decay rate and angular
frequency. \model{} maps \(v_t\in\R^D\) to complex excitations
\(e_t\in\C^M\), with \(m\)-th component \(e_{m,t}\).

Let \(p_{m,t}=e^{\lambda_mh_t}\) and \(z_{m,t}\in\C\) denote the
discrete pole and modal state. Under a zero-order hold on \(e_{m,t}\),
the exact evolution is
\begin{equation}
z_{m,t}
=
p_{m,t}z_{m,t-1}
+
\chi_t^{(r)}\frac{p_{m,t}-1}{\lambda_m}e_{m,t}.
\label{eq:recurrence}
\end{equation}
The injection gates are
\[
\chi_t^{(1)}=w_t\widetilde o_t,
\qquad
\chi_t^{(2)}=w_t.
\]
Thus \(w_t\) blocks injection at invalid tokens, while
\(\widetilde o_t\) additionally blocks direct-bank injection at fully
unobserved tokens; neither stops the elapsed-time state transition.
The cascaded bank omits \(o\) to preserve memory transported by the
direct bank.

For \(h_t>0\),
\[
|p_{m,t}|=e^{-\alpha_mh_t}<1.
\]
Regular sampling therefore yields a fixed pole \(p_m\), whereas
irregular sampling changes \(p_{m,t}\) with \(h_t\) while preserving
stability.

\subsection{Modal moment readout}
For either bank, suppress the bank index and set
\(w_{t,\tau}=w_tw_{t-\tau}\) for in-range pairs and zero otherwise.
For each \(\tau\in\{0\}\cup\mathcal T\), mode \(m\) contributes
\begin{equation}
R_{m,\tau}
=
\frac{
\sum_t w_{t,\tau}
z_{m,t}\overline{z_{m,t-\tau}}
}{
\max\{1,\sum_t w_{t,\tau}\}
}.
\label{eq:modal-moments}
\end{equation}
Here \(R_{m,0}\) is the empirical modal energy, while the complex nonzero-lag
moments retain phase-sensitive temporal dependence.
Define
\[
\operatorname{rlog}(u)
=
\begin{cases}
\log(1+|u|)\,u/|u|, & u\neq0,\\
0, & u=0.
\end{cases}
\]
Representing each complex value by its real and imaginary parts, the
descriptor of mode \(m\) is
\begin{equation}
\phi_m
=
\Bigl[
\log(1+R_{m,0})
\ \Big|\
\bigl(\operatorname{rlog}R_{m,\tau}\bigr)_{\tau\in\mathcal T}
\Bigr]
\in\R^{1+2|\mathcal T|}.
\label{eq:lag-readout}
\end{equation}
We use \(\mathcal T=\{1,2,4\}\), giving seven real coordinates per mode
and \(7M\) coordinates per bank. Let
\(\phi^{(r)}(X)\in\R^{7M}\) concatenate the mode descriptors from bank
\(r\). The final descriptor and logits are
\(g(X)=[\phi^{(1)}(X)\mid\phi^{(2)}(X)]\in\R^{14M}\) and
\(f_\Theta(X)=W_{\rm head}g(X)+b_{\rm head}\in\R^K\).

\begin{algorithm}[t]
\caption{\model{} forward pass}
\label{alg:alphabet}
\small
\begin{algorithmic}[1]
\REQUIRE Values $X$, intervals $h$, observation mask $o$, validity mask $w$,
lag set $\mathcal T$
\STATE $\alpha_m^{(r)}\leftarrow\alpha_{\min}
+(\alpha_{\max}-\alpha_{\min})\operatorname{sigmoid}(\delta_m^{(r)})$
\STATE $\omega_m^{(r)}\leftarrow\omega_{\max}\tanh(\vartheta_m^{(r)})$,\quad
$\lambda_m^{(r)}\leftarrow-\alpha_m^{(r)}+i\omega_m^{(r)}$
\STATE $\widetilde o_t\leftarrow\max_c o_{t,c}$
\STATE $\chi_t^{(1)}\leftarrow w_t\widetilde o_t$,\quad
$\chi_t^{(2)}\leftarrow w_t$
\STATE $H^{(0)}\leftarrow w\odot
\operatorname{SiLU}(\operatorname{DWConv}(w\odot((o\odot X)W_X^\top)))$
\STATE $U^{(1)}\leftarrow\operatorname{RMSNorm}(H^{(0)})$
\STATE $[E_{\Re}^{(1)}\mid E_{\Im}^{(1)}]\leftarrow U^{(1)}A_1$,\quad
$E^{(1)}\leftarrow E_{\Re}^{(1)}+iE_{\Im}^{(1)}$
\STATE $Z^{(1)}\leftarrow
\operatorname{DirectBankScan}(E^{(1)};\lambda^{(1)},h,\chi^{(1)})$
\STATE $H^{(1)}\leftarrow H^{(0)}+
s_1\odot\left([\Re Z^{(1)}\mid\Im Z^{(1)}]A_1^\top+
\gamma_1\odot U^{(1)}\right)$
\STATE $\phi^{(1)}\leftarrow
\operatorname{MomentReadout}(Z^{(1)};w, \mathcal T)$
\STATE $\overline H^{(1)}\leftarrow w\odot H^{(1)}$
\STATE $U^{(2)}\leftarrow w\odot\operatorname{RMSNorm}\!\left(
\operatorname{SiLU}(\operatorname{DWConv}(\overline H^{(1)}))\right)$
\STATE $[E_{\Re}^{(2)}\mid E_{\Im}^{(2)}]\leftarrow U^{(2)}A_2$,\quad
$E^{(2)}\leftarrow E_{\Re}^{(2)}+iE_{\Im}^{(2)}$
\STATE $Z^{(2)}\leftarrow
\operatorname{CascadedBankScan}(E^{(2)};\lambda^{(2)},h,\chi^{(2)})$
\STATE $\phi^{(2)}\leftarrow
\operatorname{MomentReadout}(Z^{(2)};w, \mathcal T)$
\RETURN $W_{\rm head}[\phi^{(1)}\mid\phi^{(2)}]+b_{\rm head}$
\end{algorithmic}
\end{algorithm}

\section{Theoretical Properties of Modal Compression}
\label{sec:theory}

\subsection{Pole moments as spectral measurements}
On an equal-step, fully observed grid, let
\(V_t^{(y)}\in\R^D\) be the second-order stationary feature process
entering a pole bank for class \(y\), with spectral measure
\(\mathbf F_y\) defined by
\begin{equation}
\Gamma_y(k)=\E[V_{t+k}^{(y)}V_t^{(y)\top}]
=\int_{-\pi}^{\pi}e^{ik\theta}\mathbf F_y(d\theta).
\label{eq:matrix-spectrum}
\end{equation}
We use uncentered moments, so a nonzero feature mean appears as an atom
at frequency zero. For orthonormal analysis directions
\(a,b\in\R^D\), set \(c=a+ib\) and
\(e_t^{(y)}=V_t^{(y)\top}c\), whose scalar spectral measure is
\(\mu_{y,c}(d\theta)=c^\top\mathbf F_y(d\theta)\bar c\).

Write \(p=e^{\lambda h}=\rho e^{i\varphi}\) and
\(\gamma_h(\lambda)=(e^{\lambda h}-1)/\lambda\). Here \(\varphi\)
selects the preferred frequency, while \(\rho<1\) controls the
bandwidth. The frequency response and stationary state are
\begin{equation}
H_{\lambda,h}(e^{i\theta})=
\frac{\gamma_h(\lambda)}{1-pe^{-i\theta}},
\qquad
z_t=\gamma_h(\lambda)\sum_{j=0}^{\infty}p^je_{t-j}.
\label{eq:stationary-pole-state}
\end{equation}
Stable linear filtering gives
\cite{byrnes2000three,georgiou2002statecovariance}
\begin{align}
Q_y(c,\lambda;h)&:=\E|z_t|^2
=|\gamma_h(\lambda)|^2\int_{-\pi}^{\pi}
\frac{\mu_{y,c}(d\theta)}{|e^{i\theta}-p|^2},
\label{eq:pole-energy-spectrum}\\
R_{y,\tau}(c,\lambda;h)&:=\E[z_t\bar z_{t-\tau}]
=|\gamma_h(\lambda)|^2\int_{-\pi}^{\pi}
\frac{e^{i\tau\theta}\mu_{y,c}(d\theta)}
{|e^{i\theta}-p|^2}.
\label{eq:pole-lag-spectrum}
\end{align}
Thus \(Q_y\) is localized spectral mass rather than energy at a single
Fourier frequency. The kernel is centered near \(\varphi\) and sharpens
as \(\rho\) approaches one. Defining
\(P_p(\theta)=(1-|p|^2)/|e^{i\theta}-p|^2\) gives
\begin{equation}
\frac{1-|p|^2}{|\gamma_h(\lambda)|^2}Q_y(c,\lambda;h)
=\int P_p(\theta)\,\mu_{y,c}(d\theta).
\label{eq:poisson-mass}
\end{equation}
Hence modal energy is a positive rescaling of a Poisson-localized
spectral mass. The complex moments \(R_{y,\tau}\) retain
phase-sensitive lag information and provide additional coordinates
for the implemented finite bank.
\subsection{Characteristicness and affine separation}

Define the directional Poisson transform
\begin{equation}
\mathcal P_{\mathbf F}(c,p)
:=
\int_{-\pi}^{\pi}
P_p(\theta)\,c^\top\mathbf F(d\theta)\bar c.
\label{eq:directional-poisson-transform}
\end{equation}
For fixed \(c\), this transform determines the directional spectral
measure; varying the real and imaginary directions recovers the full
matrix-valued measure. Its real-analytic parameter dependence yields
the following result.

\begin{theorem}[Characteristicness and generic separation]
\label{thm:spectral-characteristic}
Let \(D\geq3\), let \(\mathcal V_2(\R^D)\) be the Stiefel manifold of
orthonormal direction pairs, and let \(\Lambda_p\) be a connected open
set of poles satisfying \(0<|p|<1\). Set
\(\Xi=\mathcal V_2(\R^D)\times\Lambda_p\) and, for
\(\xi=((a,b),p)\in\Xi\), define
\(\mathcal P_{\mathbf F}(\xi)=\mathcal P_{\mathbf F}(a+ib,p)\).

If finite matrix-valued spectral measures \(\mathbf F\) and \(\mathbf G\)
satisfy \(\mathcal P_{\mathbf F}(\xi)=\mathcal P_{\mathbf G}(\xi)\) on
a measurable set \(E\subseteq\Xi\) of positive smooth volume, then
\(\mathbf F=\mathbf G\). Consequently, for any pairwise distinct
\(\mathbf F_1,\ldots,\mathbf F_K\), their transform values are pairwise
distinct for almost every \(\xi\in\Xi\). Injectivity also holds at any
fixed damping radius \(\rho\in(0,1)\) when frequency and direction pairs
vary.
\end{theorem}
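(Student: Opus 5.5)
The plan is to reduce everything to real-analyticity of $\xi\mapsto\mathcal P_{\mathbf F}(\xi)-\mathcal P_{\mathbf G}(\xi)$ on the connected real-analytic manifold $\Xi$. Once that is in place, coincidence on a set of positive volume forces coincidence everywhere. We then invert the Poisson transform and polarize in the direction variables.

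\textbf{Step 1 (analyticity and identity theorem).} Write $\mathbf H=\mathbf F-\mathbf G$, a finite complex Hermitian-matrix-valued measure. The map $\xi\mapsto\mathcal P_{\mathbf H}(\xi)$ is the integral of $P_p(\theta)\,c^\top\mathbf H(d\theta)\bar c$. On compact subsets of $\{|p|<1\}$ the kernel $P_p(\theta)=(1-|p|^2)/|e^{i\theta}-p|^2$ is real-analytic in $(\Re p,\Im p)$, uniformly in $\theta$, and the factor in $c=a+ib$ is a quadratic polynomial. Differentiating under the integral, with bounds uniform from $|e^{i\theta}-p|\ge 1-|p|$, therefore gives a real-analytic function on the analytic manifold $\mathcal V_2(\R^D)\times\Lambda_p$. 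Connectedness of $\Xi$ is the place to be careful. For $D\ge3$ the Stiefel manifold $\mathcal V_2(\R^D)$ is connected, and this is why the hypothesis $D\ge3$ is imposed. $\Lambda_p$ is connected by assumption. A real-analytic function on a connected analytic manifold whose zero set has positive measure vanishes identically; one proves this via local power series and the fact that a nonzero analytic function has a null zero set. Hence $\mathcal P_{\mathbf H}\equiv0$ on $\Xi$.

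\textbf{Step 2 (inverting the Poisson transform).} Fix $c$. The function $u(p)=\int P_p\,d\mu_c$, with $\mu_c=c^\top\mathbf H\bar c$ a finite real signed measure, is harmonic on the disk. It vanishes on the open set $\Lambda_p$, so by the identity theorem for harmonic functions it vanishes on the whole disk. Expanding $P_{re^{i\varphi}}(\theta)=\sum_k r^{|k|}e^{ik(\varphi-\theta)}$ then gives vanishing Fourier coefficients $\hat\mu_c(k)=0$ for all $k$, hence $\mu_c=0$. For the fixed-radius claim, hold $r=\rho$ fixed and vary $\varphi$ and the direction pair. The function $\varphi\mapsto u(\rho e^{i\varphi})$ is real-analytic in $\varphi$, so vanishing on an open arc gives vanishing on the circle. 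Its Fourier coefficients are $\rho^{|k|}\hat\mu_c(k)$, and these vanish, so again $\mu_c=0$. The direction part at fixed $\rho$ is handled by the same analyticity argument as in Step 1.

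\textbf{Step 3 (polarization).} We now know $c^\top\mathbf H(B)\bar c=0$ for every Borel $B$ and every orthonormal pair $(a,b)$. Since $\mathbf H(B)$ is Hermitian, this quantity equals $a^\top S a+b^\top S b+2a^\top A b$, where $S=\Re\mathbf H(B)$ is symmetric and $A=\Im\mathbf H(B)$ is antisymmetric, up to sign convention.
\begin{itemize}
\item Replacing $b\to-b$ separates the two pieces, so $a^\top Sa+b^\top Sb=0$ and $a^\top Ab=0$ for all orthonormal pairs.
\item The condition $a^\top Ab=0$ for all orthonormal $a,b$ gives $A=0$.
\item For $S$, take $a,b$ to be eigenvectors. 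Then $s_i+s_j=0$ for all $i\ne j$. With $D\ge3$ we have three indices, so $s_i+s_j=s_i+s_k=s_j+s_k=0$ and hence all $s_i=0$.
\end{itemize}
Thus $\mathbf H=0$, which is a second use of $D\ge3$.

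\textbf{Step 4 (generic separation).} For each pair $k\ne l$, the set where $\mathcal P_{\mathbf F_k}=\mathcal P_{\mathbf F_l}$ must be null: otherwise Steps 1–3 give $\mathbf F_k=\mathbf F_l$. A finite union of null sets is null.

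The main obstacle I expect is Step 1. We need to justify rigorously that the zero set of a nontrivial real-analytic function on a connected manifold is null, and to verify connectedness and analyticity uniformly. By comparison, the Poisson inversion and polarization are routine.
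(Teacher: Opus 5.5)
Your proposal is correct and follows the paper's proof essentially step for step. The shared steps are real-analyticity of the transform difference on the connected manifold $\Xi$, Poisson--Fourier inversion with nonzero multipliers $r^{|k|}$, the $b\mapsto -b$ polarization followed by the three-eigenvalue argument for $D\ge 3$, and a finite union of null coincidence sets for generic separation. Your version is slightly more careful in three places the paper leaves implicit: the harmonic continuation from $\Lambda_p$ to the full disk (and from an arc to the full circle at fixed $\rho$), which the Fourier inversion on a whole circle needs, and your observation that $D\ge3$ is also what makes $\mathcal V_2(\R^D)$ connected.
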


For \(p=e^{\lambda h}\),
\[
Q_{\mathbf F}(c,\lambda;h)
=
\frac{|\gamma_h(\lambda)|^2}{1-|p|^2}
\mathcal P_{\mathbf F}(c,p).
\]
The positive class-independent factor transfers generic separation to
the modal energies \(Q_y\). Thus a generic mode can separate classes
that have the same feature mean and \(\Gamma(0)\) but differ in some
\(\Gamma(k)\), \(k\ne0\).

\begin{corollary}[Affine realizability and sequence-length consistency]
\label{cor:affine-consistency}
Fix a mode whose class energies \(Q_1,\ldots,Q_K\) are pairwise
distinct, and set \(q_y=\log(1+Q_y)\). The affine logits
\(\ell_y(q)=2q_yq-q_y^2\) classify every population prototype exactly,
since
\[
\ell_i(q_i)-\ell_j(q_i)=(q_i-q_j)^2>0
\qquad (i\ne j).
\]

Suppose each class feature process is stationary ergodic with finite
second moments. For the zero-initialized state trajectory, define
\[
\widehat Q_T=\frac1T\sum_{t=1}^T|z_t|^2,\qquad
\widehat y_T=\operatorname*{arg\,max}_{j}
\ell_j\!\left(\log(1+\widehat Q_T)\right).
\]
Then
\begin{equation}
\widehat Q_T\xrightarrow[T\to\infty]{\mathrm{a.s.}}Q_y,
\qquad
\Pr(\widehat y_T\ne y)\longrightarrow0.
\label{eq:ergodic-consistency}
\end{equation}
Under bounded excitation and summable energy autocovariance, the
supplementary proof gives an \(O(T^{-1})\) error-probability bound.
\end{corollary}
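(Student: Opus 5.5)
The plan has three parts: exact separation at the population prototypes, almost-sure convergence of the empirical energy, and then consistency of the decision rule by continuity.

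\textbf{Exact separation at the prototypes.} Because the class energies \(Q_1,\ldots,Q_K\) are pairwise distinct and \(u\mapsto\log(1+u)\) is strictly increasing on \([0,\infty)\), the values \(q_y=\log(1+Q_y)\) are also pairwise distinct. Each logit \(\ell_y(q)=2q_yq-q_y^2\) is affine in \(q\), and expanding the square gives
\[
\ell_i(q_i)-\ell_j(q_i)=(q_i-q_j)^2>0 .
\]
So the prototype of class \(i\) is the strict argmax. More generally, \(\ell_j(q)=q^2-(q-q_j)^2\), so the argmax is nearest-prototype classification on the real line. The decision region of class \(y\) is therefore an open interval containing \(q_y\), with margin \(\delta_y=\tfrac12\min_{j\ne y}|q_y-q_j|>0\).

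\textbf{Almost-sure convergence of \(\widehat Q_T\).} Write the zero-initialized state as \(z_t=z_t^{\rm st}-p^{t}z_0^{\rm st}\). Here \(z_t^{\rm st}=\gamma_h(\lambda)\sum_{j\ge0}p^je_{t-j}\) is the stationary state of \eqref{eq:stationary-pole-state}, built from the two-sided stationary extension of the class process. The steps, in order:
\begin{enumerate}
\item \emph{Stationarity and ergodicity of the filter output.} The series defining \(z_t^{\rm st}\) converges in \(L^2\) and almost surely, since \(|p|<1\) and \(\E|e_t|^2<\infty\). The output is a measurable shift-equivariant function of an ergodic process, so \(z_t^{\rm st}\) is stationary ergodic. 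It has \(\E|z_t^{\rm st}|^2=Q_y\) by \eqref{eq:pole-energy-spectrum}.
\item \emph{Ergodic average.} Birkhoff's theorem gives \(\frac1T\sum_{t=1}^T|z_t^{\rm st}|^2\to Q_y\) almost surely.
\item \emph{Transient.} The error \(\epsilon_t=p^tz_0^{\rm st}\) satisfies \(\sum_t|\epsilon_t|^2\le|z_0^{\rm st}|^2/(1-|p|^2)<\infty\) almost surely, so \(\frac1T\sum_t|\epsilon_t|^2\to0\).
\item \emph{Cross term.} By Cauchy--Schwarz, \(\frac1T\sum_t|z_t^{\rm st}\epsilon_t|\le\bigl(\frac1T\sum_t|z_t^{\rm st}|^2\bigr)^{1/2}\bigl(\frac1T\sum_t|\epsilon_t|^2\bigr)^{1/2}\to0\).
\end{enumerate}
Together these give \(\widehat Q_T\to Q_y\) almost surely.

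\textbf{Consistency of the decision.} Continuity of \(\log(1+\cdot)\) gives \(\log(1+\widehat Q_T)\to q_y\) almost surely. On the event \(|\log(1+\widehat Q_T)-q_y|<\delta_y\) we have \(\widehat y_T=y\). Almost-sure convergence implies convergence in probability, so \(\Pr(\widehat y_T\ne y)\to0\).

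\textbf{The \(O(T^{-1})\) rate.} Apply Chebyshev's inequality: \(\Pr(|\widehat Q_T-Q_y|>\eta)\le\E|\widehat Q_T-Q_y|^2/\eta^2\), with \(\eta\) chosen so that the margin \(\delta_y\) is preserved under the map \(\log(1+\cdot)\), which is \(1\)-Lipschitz on \([0,\infty)\). The bias from the transient is \(O(T^{-1})\) because \(\sum_t|p|^{2t}<\infty\). The variance of the stationary average is bounded by \(\frac1T\sum_k|\mathrm{Cov}(|z_0|^2,|z_k|^2)|=O(T^{-1})\) under the summable energy-autocovariance assumption. Bounded excitation keeps the cross terms uniformly bounded.

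\textbf{Main obstacle.} The delicate step is the first one: justifying that the filter output inherits ergodicity and that the zero-initialized state differs from the stationary one by a vanishing transient. This requires the two-sided extension of the process and almost-sure convergence of \(\sum_j p^je_{t-j}\). I would handle it by Borel--Cantelli: stationarity makes \(\E|e_{t-j}|\) uniformly bounded, so \(\E\sum_j|p|^j|e_{t-j}|<\infty\), which gives absolute almost-sure convergence.
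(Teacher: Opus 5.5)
Your proposal is correct and follows essentially the same route as the paper: the prototype identity, Birkhoff's theorem for the stationary filtered state, the transient $z_t^{(0)}-z_t^\star=-p^tz_0^\star$ with vanishing average energy, a positive class margin, and Chebyshev under summable energy autocovariance. The differences are minor. The paper uses the sharper $Q$-scale radius $\varrho_y=\min_{j\ne y}|\sqrt{(1+Q_y)(1+Q_j)}-(1+Q_y)|\ge\delta_y$ (the exact preimage of the log-scale midpoint), and it subtracts a deterministic transient bound $b_T$ from that radius instead of folding the transient into the mean-squared error; also, your almost-sure convergence of $\sum_j p^je_{t-j}$ is really a Tonelli argument, not Borel--Cantelli.
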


When \(2M\leq D\), the separating direction pair can be completed to
the model's semi-orthogonal \(M\)-mode frame; finite-bank margins are
examined below.

\subsection{Information-preserving moment conditioning}

We show that the logarithmic conditioning used in the modal descriptor
preserves moment information while controlling perturbations.

\begin{lemma}[Moment conditioning]
\label{lem:radial-log-nonexpansive}
For
\[
u=\bigl(u_0,(u_\tau)_{\tau\in\mathcal T}\bigr)
\in\R_{\geq0}\times\C^{|\mathcal T|},
\]
define
\[
\Psi(u)=
\Bigl[
\log(1+u_0)
\ \Big|\
\bigl(\operatorname{rlog}(u_\tau)\bigr)_{\tau\in\mathcal T}
\Bigr],
\]
identifying complex coordinates with pairs of real coordinates. Then
\(\Psi\) is injective and, for all \(u,v\),
\begin{equation}
\|\Psi(u)-\Psi(v)\|_2^2
\leq
\langle\Psi(u)-\Psi(v),u-v\rangle
\leq\|u-v\|_2^2.
\label{eq:radial-log-nonexpansive}
\end{equation}
Moreover, if \(\|u\|_2,\|v\|_2\leq U\), then
\begin{equation}
\frac{1}{1+U}\|u-v\|_2
\leq\|\Psi(u)-\Psi(v)\|_2
\leq\|u-v\|_2.
\label{eq:radial-log-bilipschitz}
\end{equation}
These statements extend to direct sums over all modes and both banks.
\end{lemma}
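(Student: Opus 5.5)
The plan is to show that \(\Psi\) is the gradient of a convex function whose Hessian has eigenvalues in \((0,1]\), and then read off all three claims from that structure. The map acts blockwise: on the real coordinate \(u_0\geq0\) it is \(\psi(u_0)=\log(1+u_0)\), and on each complex coordinate, viewed as a vector in \(\R^2\), it is the radial map \(\operatorname{rlog}(x)=g(|x|)\,x/|x|\) with \(g(r)=\log(1+r)\). Both inequalities in \eqref{eq:radial-log-nonexpansive} are sums of blockwise inner products and squared norms, and so is the bi-Lipschitz bound. It therefore suffices to prove them for one block and then sum over \(u_0\), all \(\tau\in\mathcal T\), all modes and both banks. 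This also gives the direct-sum extension. Injectivity is immediate: \(\log(1+\cdot)\) is strictly increasing on \(\R_{\geq0}\), and \(\operatorname{rlog}\) preserves the phase while sending the modulus through a strictly increasing map fixing \(0\). Injectivity also follows from the left inequality in \eqref{eq:radial-log-bilipschitz}.

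\textbf{Firm nonexpansiveness.} On each block, \(\operatorname{rlog}=\nabla\Phi\) with \(\Phi(x)=G(|x|)\) and \(G(r)=\int_0^r g\). Since \(g\) is increasing and \(g(0)=0\), \(\Phi\) is convex, and it is \(C^1\) on \(\R^2\). For \(x\neq0\), its Hessian has eigenvalue \(g'(r)=1/(1+r)\) in the radial direction and \(g(r)/r=\log(1+r)/r\) in the tangential direction. Both lie in \((0,1]\). Hence \(\nabla\Phi\) is \(1\)-Lipschitz, so \(\Phi\) is convex with \(1\)-Lipschitz gradient. The Baillon--Haddad theorem, or equivalently co-coercivity of such gradients, then gives
\[
\langle\nabla\Phi(x)-\nabla\Phi(y),x-y\rangle\geq\|\nabla\Phi(x)-\nabla\Phi(y)\|^2 .
\]
This is the left inequality. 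The right inequality follows from Cauchy--Schwarz and the Lipschitz bound. The scalar block is the one-dimensional version of the same argument: \(\psi\) is nondecreasing with \(\psi'\in(0,1]\). We extend \(\psi\) to \(\R\) as an odd function if needed, though only \(\R_{\geq0}\) is used. The origin, where \(\operatorname{rlog}\) is only \(C^1\), is handled by continuity: the Hessian bound holds off a null set and \(\nabla\Phi\) is continuous, so integrating along segments still yields the Lipschitz bound.

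\textbf{Lower Lipschitz bound.} For \(\|x\|,\|y\|\leq U\), we integrate the Jacobian along the segment \(y+s(x-y)\), which stays in the ball of radius \(U\). On that segment every Jacobian eigenvalue is at least \(\min\{1/(1+r),\log(1+r)/r\}=1/(1+r)\geq 1/(1+U)\), using \(\log(1+r)\geq r/(1+r)\). The Jacobian is symmetric, so this gives
\[
\langle\Psi(x)-\Psi(y),x-y\rangle\geq\frac{\|x-y\|^2}{1+U}.
\]
Cauchy--Schwarz then yields \(\|\Psi(x)-\Psi(y)\|\geq\|x-y\|/(1+U)\). The blockwise norms are bounded by the global \(\|u\|_2\), so the constant carries over to the direct sum.

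\textbf{Main obstacle.} The main step is the co-coercive inequality for the two-dimensional radial map. The Baillon--Haddad route needs \(\operatorname{rlog}\) to be a gradient with Hessian bounded by the identity. The computation of the radial and tangential eigenvalues, together with the nonsmooth point at \(0\), is where care is needed. If that route becomes awkward, we would fall back on a direct verification. Write \(x=re^{i\alpha}\) and \(y=se^{i\beta}\), set \(A=g(r)\) and \(B=g(s)\), and reduce the claim to
\[
(A-B)(r-s)+(1-\cos(\alpha-\beta))(Ar+Bs-2AB)\geq0 .
\]
This holds because \(g(r)\leq r\) gives \(Ar+Bs\geq A^2+B^2\geq 2AB\), and because \(g\) is \(1\)-Lipschitz and monotone.
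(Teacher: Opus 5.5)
Your proof is correct. For injectivity and the lower Lipschitz bound it matches the paper: the same radial and tangential eigenvalues $1/(1+r)$ and $\log(1+r)/r$, integration of the Jacobian along the segment inside the ball, and Cauchy--Schwarz.

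The two arguments differ on the firm-nonexpansive inequality. The paper does not invoke Baillon--Haddad. It writes $\Delta=\int_0^1 J_t d\,dt$, with $J_t$ the symmetric block Jacobian at $s+td$, and chains
$\|\Delta\|_2^2\le\int_0^1\|J_td\|_2^2\,dt\le\int_0^1\langle J_td,d\rangle\,dt=\langle\Delta,d\rangle\le\|d\|_2^2$.
The first step is Jensen, and the second uses $J_t^2\preceq J_t$ whenever $0\preceq J_t\preceq I$. This is a self-contained proof of co-coercivity in the twice-differentiable case. The same integral representation then gives the lower bound by replacing $0\preceq J_t$ with $(1+U)^{-1}I\preceq J_t$, so one computation yields all three inequalities.

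Your route makes explicit why $J_t$ is symmetric: it is the Hessian of the convex potential $G(|x|)$. Baillon--Haddad also applies to any convex $C^1$ potential with $1$-Lipschitz gradient, with no second-derivative hypothesis. The costs are an external theorem and the fact that you still need the Jacobian integral for the lower bound. Your null-set remark at the origin is unnecessary, since both eigenvalues extend continuously to $1$ and the Jacobian equals $I$ at $0$.

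Your fallback contains an error: the displayed reduction is not an identity. With $\theta=\alpha-\beta$,
$\langle\Delta,d\rangle-\|\Delta\|_2^2=(A-B)(r-s)-(A-B)^2+(1-\cos\theta)(As+Br-2AB)$.
At $\theta=0$, your expression drops the $-(A-B)^2$ term, and the angular coefficient should be $As+Br-2AB$, not $Ar+Bs-2AB$. The corrected expression is still nonnegative:
\begin{itemize}
\item the first two terms, because $g$ is monotone and $1$-Lipschitz;
\item the last term, because $As+Br-2AB=A(s-B)+B(r-A)\ge0$ by $g(t)\le t$ and $A,B\ge0$.
\end{itemize}
This is only a backup, so it does not affect your main argument.
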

The proof and a trajectory-to-descriptor perturbation bound are in the supplement.

\subsection{Stable dynamics and bounded prediction}

Unlike the spectral results, the following bounds require only stable
poles, bounded normalized excitation, and \(h_t\geq0\), and apply to
both banks.

\begin{proposition}[Modal input-to-state bound]
\label{prop:stability}
If \(\alpha_\star=\min_m\alpha_m>0\), \(\|e_t\|_2\leq V\), and
\(T_n=\sum_{t=1}^n h_t\), then
\begin{equation}
\|z_n\|_2
\leq
e^{-\alpha_\star T_n}\|z_0\|_2
+\frac{V}{\alpha_\star}(1-e^{-\alpha_\star T_n}).
\end{equation}
\end{proposition}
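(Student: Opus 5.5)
The plan is to treat the modal state vector \(z_t=(z_{1,t},\ldots,z_{M,t})\) through the diagonal form of the recurrence \eqref{eq:recurrence}, bound the transition and input gains of each step separately, and then unroll a scalar comparison recursion by induction on \(n\). Write \(z_t=P_tz_{t-1}+\chi_t G_te_t\), where \(P_t=\operatorname{diag}(p_{m,t})\) and \(G_t=\operatorname{diag}\bigl((p_{m,t}-1)/\lambda_m\bigr)\). Both matrices are diagonal, so their operator norms are the largest moduli of their entries. Since \(\chi_t\in\{0,1\}\), the gate can only remove the input term and can be bounded by \(1\).

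\emph{Transition gain.} For each mode, \(|p_{m,t}|=e^{-\alpha_mh_t}\leq e^{-\alpha_\star h_t}\) because \(h_t\geq0\). Hence \(\|P_tz\|_2\leq e^{-\alpha_\star h_t}\|z\|_2\).

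\emph{Input gain.} Use the integral representation
\[
\frac{e^{\lambda_mh}-1}{\lambda_m}=\int_0^{h}e^{\lambda_ms}\,ds .
\]
It gives
\[
\Bigl|\frac{p_{m,t}-1}{\lambda_m}\Bigr|\leq\int_0^{h_t}e^{-\alpha_ms}\,ds\leq\int_0^{h_t}e^{-\alpha_\star s}\,ds=\frac{1-e^{-\alpha_\star h_t}}{\alpha_\star}.
\]
The integrand is monotone in the decay rate, so no separate calculus argument is needed for monotonicity in \(\alpha\). The representation also covers \(h_t=0\) cleanly, where the gain is zero. Combined with \(\|e_t\|_2\leq V\), this yields
\[
\|z_t\|_2\leq e^{-\alpha_\star h_t}\|z_{t-1}\|_2+\frac{V}{\alpha_\star}\bigl(1-e^{-\alpha_\star h_t}\bigr).
\]

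\emph{Induction.} Set \(b_n=e^{-\alpha_\star T_n}\|z_0\|_2+\frac{V}{\alpha_\star}(1-e^{-\alpha_\star T_n})\), so that \(b_0=\|z_0\|_2\). Assume \(\|z_{n-1}\|_2\leq b_{n-1}\). Substituting this into the one-step bound and using \(T_n=T_{n-1}+h_n\) gives
\[
\|z_n\|_2\leq e^{-\alpha_\star h_n}b_{n-1}+\frac{V}{\alpha_\star}(1-e^{-\alpha_\star h_n})=b_n .
\]
The last equality is exact: the terms \(-\frac{V}{\alpha_\star}e^{-\alpha_\star T_n}\) and \(\frac{V}{\alpha_\star}e^{-\alpha_\star h_n}\) combine telescopically. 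This proves the claim.

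The step I expect to need the most care is the input-gain bound. A crude estimate such as \(|p-1|/|\lambda|\leq 2/|\lambda|\) would lose the factor \(1-e^{-\alpha_\star h_t}\), and that factor is exactly what makes the recursion telescope. It would also fail to vanish as \(h_t\to0\). The integral representation sidesteps both problems.

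The bound holds uniformly for irregular intervals. It is independent of the masks, since \(\chi_t\leq1\). It applies to both banks, because the only bank-specific ingredient is the gate \(\chi_t^{(r)}\).
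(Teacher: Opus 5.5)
Your proof is correct and is essentially the paper's argument: both rest on \(|p_{m,t}|=e^{-\alpha_mh_t}\le e^{-\alpha_\star h_t}\), \(|\chi_t^{(r)}|\le1\), and the key estimate \(|\gamma_{h}(\lambda_m)|\le\int_0^{h}e^{-\alpha_m u}\,du\). The only difference is bookkeeping. The paper first unrolls the recurrence by variation of constants and sums the per-interval integrals over adjacent physical-time intervals to obtain \(\int_0^{T_n}e^{-\alpha_\star(T_n-s)}\,ds\), whereas you bound a single step and close by induction on the scalar comparison recursion, which is equivalent.
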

Thus, for \(z_0=0\), \(\|z_n\|_2\leq V/\alpha_\star\) independently
of token count. Modewise and synthesized-stream refinements are given
in the supplement.

\begin{corollary}[Bounded prediction interface]
\label{cor:descriptor-bound}
Let \(\eta_r\in\R^D\) be the RMSNorm gain of bank
\(r\in\{1,2\}\). Semi-orthogonality and RMS normalization imply
\[
\|e_t^{(r)}\|_2\leq
\kappa_r:=\sqrt D\|\eta_r\|_\infty,\qquad
B_{r,m}:=\frac{\kappa_r}{\alpha_m^{(r)}}.
\]
Define
\[
C_g^2
=
(1+|\mathcal T|)
\sum_{r=1}^2\sum_{m=1}^M
\log^2(1+B_{r,m}^2).
\]
With zero-initialized stable states,
\begin{equation}
\begin{aligned}
\|g(X)\|_2 &\leq C_g,\\
\|f_\Theta(X)\|_2
&\leq \|W_{\rm head}\|_2C_g+\|b_{\rm head}\|_2,
\end{aligned}
\label{eq:prediction-interface-bound}
\end{equation}
uniformly over finite raw-input amplitudes, sequence lengths, validity
patterns, and nonnegative interval grids in exact arithmetic.
\end{corollary}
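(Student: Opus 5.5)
The proof chains three bounds: excitation, modal state, then moments and descriptor. Proposition~\ref{prop:stability} is used per mode with \(z_0=0\).

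\emph{Excitation bound.} Let \(U_t^{(r)}\) be the RMSNorm output at token \(t\). Write \(\widehat U_t=U_t/\eta_r\) (componentwise) for the normalized vector, so that \(\|\widehat U_t\|_2^2\le D\) (with \(\epsilon>0\) in RMSNorm this is still \(\le D\)). Masked tokens give \(U_t=0\), and the bound holds trivially. Hence
\[
\|U_t\|_2\le\|\eta_r\|_\infty\|\widehat U_t\|_2\le\sqrt D\,\|\eta_r\|_\infty .
\]
Since \(A_r^\top A_r=I_{2M}\), the map \(A_r^\top\) is a contraction (\(\|A_r\|_2=1\)). Identifying \(\C^M\) with \(\R^{2M}\), we get \(\|e_t^{(r)}\|_2=\|A_r^\top U_t\|_2\le\kappa_r\). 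In particular \(|e_{m,t}|\le\kappa_r\) for every mode \(m\). This step needs only that RMSNorm outputs are bounded, which holds for any finite input, so raw amplitudes drop out.

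\emph{Modal state bound.} Apply the scalar (modewise) version of Proposition~\ref{prop:stability} to mode \(m\) with decay \(\alpha_m\). The gate satisfies \(\chi_t\in\{0,1\}\), and
\[
\Bigl|\frac{p_{m,t}-1}{\lambda_m}\Bigr|=\Bigl|\int_0^{h_t}e^{\lambda_m s}\,ds\Bigr|\le\frac{1-e^{-\alpha_m h_t}}{\alpha_m}.
\]
A telescoping induction on \(|z_{m,t}|\le\kappa_r(1-e^{-\alpha_m T_t})/\alpha_m\) then gives \(|z_{m,t}|\le B_{r,m}\) for all \(t\), uniformly in \(T\), in \(h_t\ge0\), and in the gating pattern. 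I expect the only care needed here: the proposition as stated uses \(\alpha_\star\), so I need the per-mode refinement. It follows from the same induction applied to a single coordinate.

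\emph{Moment and descriptor bounds.} Each \(R_{m,\tau}\) is a weighted average (or zero when the denominator is clamped to \(1\)) of terms \(z_{m,t}\overline{z_{m,t-\tau}}\) with weights \(w_{t,\tau}\in\{0,1\}\). The clamp only shrinks it, so \(|R_{m,\tau}|\le B_{r,m}^2\) and \(0\le R_{m,0}\le B_{r,m}^2\). Because \(|\operatorname{rlog}u|=\log(1+|u|)\) and \(\log(1+\cdot)\) is monotone, each of the \(1+|\mathcal T|\) blocks of \(\phi_m\) (one real entry and \(|\mathcal T|\) complex entries, i.e.\ real pairs) has Euclidean norm at most \(\log(1+B_{r,m}^2)\). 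Therefore \(\|\phi_m^{(r)}\|_2^2\le(1+|\mathcal T|)\log^2(1+B_{r,m}^2)\). Summing over \(m\) and \(r\) gives \(\|g(X)\|_2\le C_g\).

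The logit bound follows from the operator norm and the triangle inequality:
\[
\|f_\Theta(X)\|_2\le\|W_{\rm head}\|_2\|g(X)\|_2+\|b_{\rm head}\|_2 .
\]
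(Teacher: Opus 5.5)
Your proposal is correct and follows the paper's proof essentially step for step. RMS normalization and the semi-orthogonal analysis map bound the excitation by \(\kappa_r\), the modewise zero-initialized form of Proposition~\ref{prop:stability} gives \(|z_{m,t}^{(r)}|\le B_{r,m}\), the moments are bounded by \(B_{r,m}^2\) (the paper cites Cauchy--Schwarz where you bound each product inside the average), and the radial-log block bounds, summation over modes and banks, and the operator-norm bound on the affine head finish it, with your explicit per-mode induction and empty-denominator check supplying details the paper only sketches.
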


This independence from raw-input amplitude and upstream operator norms
follows because RMS normalization precedes each scan and the head uses
only modal moments.

\section{Experiments}
\label{sec:evaluation}

\subsection{Evaluation protocol}

We evaluate a fixed registry of \BenchmarkTaskCount{} public tasks:
\BenchmarkUCRTaskCount{} UCR datasets \cite{dau2019ucr},
\BenchmarkUEAFaultTaskCount{} multivariate sequence and vibration/fault tasks
including UEA \cite{bagnall2018uea}, and \BenchmarkGeneralTaskCount{}
additional sequence, ECG, clinical/activity, and forecasting tasks. The
registry spans univariate and multivariate signals, regular and irregular
observations, and classification, multilabel, and forecasting objectives; it
is coverage-oriented rather than archive-exhaustive
\cite{bagnall2017bakeoff}. Each task is trained independently on its predefined
split, with no pooled multi-dataset training. All ten families complete every
task without missing or imputed cells. For each task--family pair, one of 18
family-native candidates is selected using TRAIN-derived validation, frozen
before TEST, and evaluated over five final-training seeds. Full registry,
split, preprocessing, and selection appear in the
supplement.
\subsection{Predictive rank, capacity, and systems efficiency}
\label{sec:final-runtime}

The left panel of Figure~\ref{fig:efficiency-summary} summarizes predictive
rank and model capacity on the \BenchmarkCompleteTaskCount{} tasks with
complete results for all ten families. \model{} has mean rank $3.97$, 14
Top-1 placements, and 43 Top-3 placements.

\begin{figure}[t]
\centering
\includegraphics[width=\columnwidth]{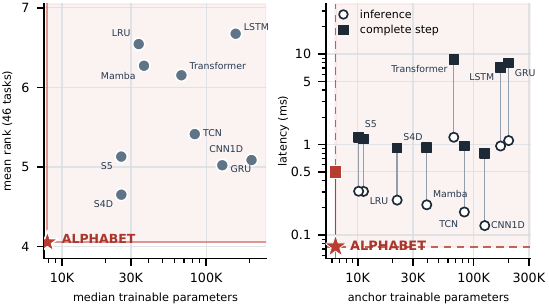}
\caption{Predictive and systems efficiency.
Left: Mean TEST rank versus median trainable parameters over the complete
ten-family subset.
Right: FP32 inference and complete-step latency at the same-width
\(D=64\) anchor (\(B=32,T=512\)).}
\label{fig:efficiency-summary}
\end{figure}

We optimize \model{} with parity-gated Triton kernels and shape-static CUDA
Graph capture, then measure steady-state FP32 inference and complete training
steps against nine baselines on one RTX~4090. We use each family's fastest
public implementation under an identical measurement harness, enabling
compiled kernels when provided. At the evaluation-width anchor ($D{=}64$,
$B{=}32$, $T{=}512$), \model{} is the smallest and fastest model tested, with
6,437 trainable parameters, $5.02\times$ faster inference and $3.93\times$
faster complete steps on average. Across the broader audited grid of batch
sizes and sequence lengths, this efficiency advantage generally persists,
although its magnitude varies with execution phase, shape, and baseline. Full
shape-wise results, timing methodology, and numerical-parity checks are
reported in the supplementary systems evaluation.

\subsection{Controlled spectral recovery under structural compression}
Theorem~\ref{thm:spectral-characteristic} establishes continuum identification
and generic population separation for fixed, spectrally distinct feature
processes. The remaining finite-bank question is whether end-to-end learning
realizes a margin large enough to estimate and exploit in practice. Moreover,
because almost every admissible mode separates any fixed finite collection of
distinct spectra, a covering fixed random bank can plausibly remain competitive
with learned poles. The following diagnostics test these predictions using
TRAIN-derived validation only.

We use a stronger moment-matched control. Class 0 is unit-variance white
Gaussian noise; class 1 is a sparse stationary MA(5) process with nonzero
coefficients only at lags 0 and 5, $X_t=aZ_t+bZ_{t-5}$, where
$a^2+b^2=1$ and $2ab=\epsilon$. Their population autocovariances agree exactly
for lags 0 through 4 and first differ at lag 5. Moment matching removes
population separation from the explicitly supplied lag-$0{:}4$ coordinates,
turning Figure~\ref{fig:finite-bank} into a controlled test of higher-lag
spectral access under compression. Consistent with this population match, the
raw $\widehat\Gamma(0{:}4)$ control remains at chance empirically, while the
oracle-informed $\widehat\Gamma(0{:}8)$ control nearly reaches the Bayes
ceiling. The full \model{} descriptor rises from $.567$ to $.991$ as the spectral
separation $\epsilon$ increases, without being given the distinguishing lag.

At $\epsilon=.4$, the 5.7k-parameter descriptor (the diagnostic-task configuration) reaches $.856$ balanced
accuracy, compared with $.787$ for LRU and $.750$ for S4D under the evaluated
configurations. Increasing $T$ closes the remaining gap: Bayes-normalized
excess-over-chance accuracy rises from $91.3\%$ at $T=128$ to $99.7\%$ at
$T=1024$. This supports a finite-sample explanation for the short-sequence
deficit; it does not establish the $O(T^{-1})$ rate or exact asymptotic
equivalence to Bayes. 

\begin{figure}[t]
\centering
\includegraphics[width=\columnwidth]{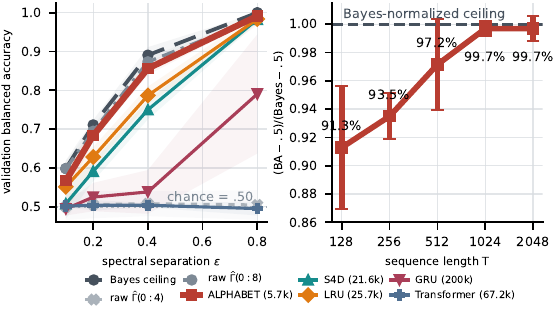}
\caption{Controlled spectral recovery under structural compression.
Left: Validation balanced accuracy versus spectral separation over
five seeds. Right: Bayes-normalized excess-over-chance accuracy
versus sequence length with paired 95\% Student-\(t\) intervals. All results
use TRAIN-derived validation only.}
\label{fig:finite-bank}
\end{figure}

\subsection{Focused structural ablation}
We isolate the two-scan design under a doubly matched control: $M'=32$
gives the one-scan baseline the same 224 real coordinates as the two $7M$
summaries, and $D'=72$ equalizes trainable parameters on every task. It is
not a smaller model, but the same budget spent on a single bank. The
audit contains 17 UCR tasks, each run over five TRAIN-derived validation
seeds with no TEST access.
  
\begin{table}[!b]
\centering
\small
\begin{tabular}{@{}lrrrr@{}}
\toprule
Variant & BA & $\Delta$ (pp) & Seed SD & Med. params \\
\midrule
Full two-scan    & \textbf{.863} & ---     & \textbf{.061} & 5,698 \\
Matched one-scan & .830          & $-3.33$ & .074          & 5,698 \\
No synthesis     & .847          & $-1.67$ & .071          & 5,570 \\
Fixed poles      & .847          & $-1.67$ & .074          & 5,634 \\
Energy-only MLP  & .808          & $-5.48$ & .085          & 5,705 \\
\bottomrule
\end{tabular}
\caption{Matched structural ablation on 17 UCR tasks and five
TRAIN-derived validation seeds. BA and seed SD are averaged across tasks,
parameter counts are task-wise medians, and $\Delta$ is relative to the full
model.}
\label{tab:structural-ablation}
\end{table}

The full model exceeds the capacity-matched one-scan control by $3.33$ points
(task-paired 95\% CI
$[0.05,6.62]$; W/T/L $10/3/4$). It exceeds the no-synthesis and energy-only
controls by $1.67$ ($[-0.53,3.88]$) and $5.48$ points
($[0.74,10.21]$), respectively. The energy-only MLP remains matched within
$0.2\%$ of the full model's trainable count, so extra head capacity does not
generally recover the lag-coordinate information. Fixed poles trail by
$1.67$ points but remain statistically competitive ($[-1.01,4.36]$), as the
genericity result predicts. None of the four reported contrasts has
Holm-adjusted $p<.05$; the results therefore indicate a task-dependent
two-scan advantage without treating any component as universally necessary.

\subsection{Pole-level prediction audit}
\begin{figure}[t]
\centering
\includegraphics[width=\columnwidth]{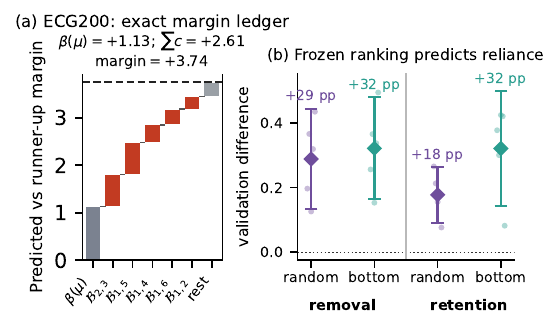}
\caption{Pole attribution audit.
Left: Exact ECG200 margin contributions from the top five poles and remainder.
Right: Validation contrasts for frozen TRAIN-ranked top-eight
removal/retention against random and bottom controls over 17 datasets. Points
denote datasets, diamonds means, and bars paired 95\% $t$ intervals.}
\label{fig:pole-attribution}
\end{figure}
For predicted class $y$ and runner-up $j$, the affine head gives an exact
per-pole margin decomposition. Let $I_{b,m}$ index the seven coordinates of
bank $b$ and pole $m$, and let $\mu$ be the optimization-fold mean descriptor:
\[
\begin{aligned}
c_{b,m}
&=(W_y-W_j)_{I_{b,m}}^\top(g-\mu)_{I_{b,m}},\\
\beta_{yj}(\mu)
&=b_y-b_j+(W_y-W_j)^\top\mu,\\
\ell_y-\ell_j
&=\beta_{yj}(\mu)+\sum_{b,m}c_{b,m}.
\end{aligned}
\]
This algebraic identity reconstructs FP32 margins within $8.6\times10^{-6}$.
Unlike an unstructured coordinate decomposition, \model{} groups its compact
descriptor explicitly by bank and pole.

Removal replaces a pole's seven descriptor coordinates with their
optimization-fold means, leaving all other coordinates intact. Across 17 datasets and 85 checkpoints, optimization-fold pole rankings
transfer to validation with mean rank correlation $.990$. Removing the top
eight poles reduces balanced accuracy by $26.9$ points more than random
removal (paired 95\% CI $[19.0,34.7]$), while retaining only the top eight
improves full-model agreement (prediction agreement with the unmasked model) by $23.3$ points over random retention
($[16.6,30.0]$). The top-versus-bottom contrasts are $29.4$
($[21.2,37.7]$) and $43.7$ points ($[33.8,53.6]$), respectively
(Figure~\ref{fig:pole-attribution}).

The cascaded bank accounts for $63.5\%$ of validation attribution and $64.4\%$
of top-eight positions, with at least one cascaded pole in 84 of 85
checkpoints. Removing all 16 cascaded poles costs $13.6$ points more than a
size-matched random mask ($[6.8,20.4]$), whereas cascaded-only retention
remains borderline and inconclusive ($4.3$ points, $[-0.04,8.65]$;
$p=.052$). Thus the audit shows systematic use of the cascaded bank
without treating mean masking as proof of architectural necessity.

\subsection{Corruption audit}
\label{sec:corruption-audit}

\begin{figure}[t]
\centering
\includegraphics[width=.8\columnwidth]{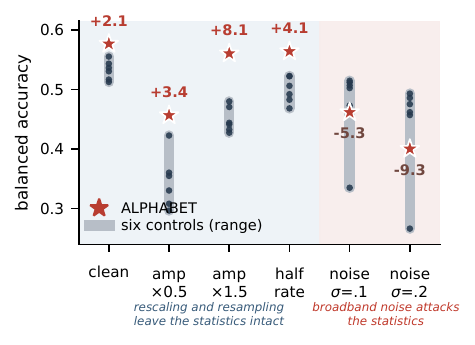}
\caption{Corruption audit on 17 UCR datasets and five seeds. Gray bars span
the nine controls' across-task means, stars show
\model{}'s across-task mean, and annotations give its gap to the best control
mean. These synthetic corruptions are not domain-level OOD.}
\label{fig:corruption}
\end{figure}

\paragraph{Mechanistic prediction.}
The modal equations predict an asymmetric response to these corruptions. If a
residual global scale $s>0$ reaches a linear modal path, then
\[
Q_m^{(s)}=s^2Q_m,
\qquad
R_{m,\tau}^{(s)}=s^2R_{m,\tau}.
\]
Scaling therefore preserves modal phase, while the radial-log map compresses
the magnitude change to $\log(1+s^2|R_{m,\tau}|)$. RMS normalization before
each scan further attenuates global scale changes, although the nonlinear lift
prevents exact invariance.

Writing the exact interval update as
\[
\Phi_h(z,e)=e^{\lambda h}z+\gamma_h(\lambda)e,
\]
constant excitation gives
\[
\Phi_{h_2}\!\left(\Phi_{h_1}(z,e),e\right)=\Phi_{h_1+h_2}(z,e).
\]
Elapsed-time transport is therefore resolution-consistent under a
zero-order-hold signal, although discarded samples, aliasing, and token-lag
readout prevent exact invariance after downsampling.

Additive white noise acts differently. Under the feature-space idealization
$V_t=S_t+N_t$, with independent isotropic white noise
$\operatorname{Cov}(N_t)=\sigma^2I$, the spectral identities give
\[
\begin{aligned}
Q_{S+N} &= Q_S+\nu, &
R_{S+N,\tau} &= R_{S,\tau}+\nu p^\tau,\\
\nu &=
\sigma^2\lVert c\rVert_2^2
\frac{|\gamma_h(\lambda)|^2}{1-|p|^2}>0,
&
p&=e^{\lambda h}.
\end{aligned}
\]
Thus white noise creates a pole-dependent energy floor and contaminates the
lag moments; radial-log conditioning compresses but does not remove these
terms. The model should therefore be comparatively more robust to amplitude
and sampling-resolution shifts than to broadband additive noise.

We test this prediction against nine validation-frozen controls under five
deterministic TEST corruptions across 17 UCR tasks
(Figure~\ref{fig:corruption}). In across-task means, \model{} leads the best
control by $10.2$ and $9.4$ points at amplitude $\times0.5$ and $\times1.5$,
and by $1.3$ points after half-rate resampling. Dataset-paired intervals
against each task's best control span zero for both amplitude shifts, so these
are aggregate advantages rather than universal task-wise wins. Under additive
noise the aggregate ordering reverses: \model{} trails by $1.6$ and $4.3$
points at $\sigma=.1$ and $.2$. The stricter task-paired gaps are $-7.2$
($[-13.6,-0.8]$) and $-10.4$ points ($[-17.8,-3.0]$), respectively.

This reversal matches the predicted asymmetry, so we report additive noise as
a distinct failure mode rather than averaging all corruptions into a generic
robustness score.
\section{Discussion and Limitations}
\label{sec:discussion}
The spectral bridge has a precise domain: identification and affine
consistency hold with the encoder fixed and the feature process stationary,
fully observed, and equal-step. Two boundaries matter. First, the object
identified is the spectrum of the learned feature process, not of the raw
input; the theory characterizes what the modal readout preserves about the
encoder's output, not what the encoder discards. Second, the guarantee is
generic rather than constructive: it does not show that optimization finds a
separating mode, and the learned-bank diagnostics supply only empirical
evidence that it can. Outside this domain, the interval-aware recurrence and
the descriptor and logit bounds remain valid on irregular grids without
stationarity; higher-order non-Gaussian class differences fall outside the
second-order theory entirely.

The empirical boundaries are equally concrete. The clearest is additive
noise, where broadband energy contaminates the modal statistics at both
evaluated noise levels; we would not recommend this representation in
low-SNR settings. Across 17 tasks, the two-scan advantage stays positive but
no ablation contrast survives Holm correction, fixed poles remain
competitive, and attribution in the fixed-pole variant is similarly
concentrated---so the evidence supports an auditable stable-bank interface
more strongly than uniquely meaningful learned pole placement.

A post-hoc shrinkage rule that suppresses weak lag moments failed to
transfer to dataset-disjoint validation (supplement), consistent with a
descriptor-level ambiguity: weak lag coherence may be nuisance noise or weak
class-bearing signal, and an unconditional rule cannot separate them. The
natural response---input-conditioned, pole-response-aware debiasing of $Q_m$
and $R_{m,\tau}$---must be frozen and tested prospectively.

\section{Conclusion}
\label{sec:conclusion}

\model{} provides evidence that competitive temporal prediction can be
achieved without a large or opaque backbone. Two banks of stable Laplace poles
compress temporal history into a fixed set of modal statistics whose spectral
meaning is characterized by the theory, while predictions decompose exactly
into per-pole contributions. Under the matched evaluation settings considered
in this work, \model{} achieves substantially lower latency than all evaluated
baselines. The audits show that this transparency is functional rather than
nominal, and demonstrate that compactness, speed, and an auditable
representation can be achieved simultaneously.
\clearpage
\appendix
\input{sections/appendix/technical_supplement_exact}

\bibliography{references}

\end{document}

%% file: sections/appendix/technical_supplement_exact.tex
\section{Proofs and Mathematical Boundaries}
\label{app:proofs}

\subsection{Radial-log geometry}

\begin{lemma}[Injective non-expansive moment conditioning]
\label{lem:app-radial-log-conditioning}
For a finite lag set \(\mathcal T\), let \(\Psi_{\mathcal T}\) apply
\(\log(1+\cdot)\) to the energy and
\(\operatorname{rlog}(u)=\log(1+|u|)u/|u|\) to each complex lag, with
\(\operatorname{rlog}(0)=0\). Identifying \(\C\) with \(\R^2\),
\(\Psi_{\mathcal T}\) is injective and maps zero to zero. For moment
descriptors \(r,s\in\R_{\geq0}\times\C^{|\mathcal T|}\), writing
\(\Delta_\Psi=\Psi_{\mathcal T}(r)-\Psi_{\mathcal T}(s)\), it satisfies
\begin{align}
\|\Delta_\Psi\|_2^2
&\leq\langle\Delta_\Psi,r-s\rangle
\leq\|r-s\|_2^2,
\label{eq:app-radial-log-nonexpansive}
\end{align}
Moreover, for any \(U\geq 0\) such that \(\|r\|_2,\|s\|_2\leq U\),
\begin{align}
\frac{1}{1+U}\|r-s\|_2
&\leq\|\Delta_\Psi\|_2
\leq\|r-s\|_2.
\label{eq:app-radial-log-bilipschitz}
\end{align}
Here \(U\) is the radius of the bounded moment-descriptor region,
not a bound on the raw input. Both statements extend by direct sums.
\end{lemma}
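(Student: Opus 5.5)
The plan is to recognise \(\Psi_{\mathcal T}\) blockwise as the gradient of a convex radial potential. Everything then reduces to Hessian eigenvalue bounds plus the Baillon--Haddad theorem.

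\emph{Setup.} Set \(f(r)=\log(1+r)\) and \(\varphi(r)=\int_0^r f(s)\,ds\). On each complex block, identified with \(\R^2\), consider \(F(u)=\varphi(|u|)\). On the energy block, use the even extension \(F(x)=\varphi(|x|)\) on \(\R\), whose derivative is the odd extension of \(\log(1+x)\); restricted to \(\R_{\geq0}\) this is the energy map. The function \(F\) is \(C^1\) with \(\nabla F(u)=f(|u|)u/|u|\) and \(\nabla F(0)=0\), which is exactly \(\operatorname{rlog}\). Away from the origin, \(F\) is \(C^2\), and its Hessian has two eigenvalues:
\begin{itemize}
\item \(f'(r)=1/(1+r)\) in the radial direction;
\item \(f(r)/r=\log(1+r)/r\) in the tangential direction.
\end{itemize}
The elementary inequalities \(r/(1+r)\leq\log(1+r)\leq r\) place both eigenvalues in \([1/(1+r),1]\).

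\emph{Gradient Lipschitz bound.} Next I show \(\nabla F\) is 1-Lipschitz and \(F\) is convex. For \(u,v\), integrate \(\nabla^2F\) along the segment \([u,v]\), which gives \(\nabla F(u)-\nabla F(v)=\int_0^1\nabla^2F(v+t(u-v))(u-v)\,dt\). If the segment passes through the origin, perturb \(v\) slightly and pass to the limit using continuity of \(\nabla F\); the segment meets the origin at most at one \(t\), so the formula still holds directly. This gives \(0\preceq\) averaged Hessian \(\preceq I\).

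\emph{Two-sided inner-product bound.} Baillon--Haddad for a convex \(C^1\) function with 1-Lipschitz gradient yields cocoercivity, \(\|\nabla F(u)-\nabla F(v)\|^2\leq\langle\nabla F(u)-\nabla F(v),u-v\rangle\). Alternatively, it follows directly: the averaged Hessian \(A\) is symmetric with \(0\preceq A\preceq I\), hence \(A^2\preceq A\). The right inequality \(\langle\Delta_\Psi,r-s\rangle\leq\|r-s\|_2^2\) is the same statement \(A\preceq I\), or Cauchy--Schwarz combined with \(\|\Delta_\Psi\|\leq\|r-s\|\), which itself follows from the first inequality.

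\emph{Direct sums.} Because \(\Psi_{\mathcal T}\) acts blockwise and each side of the inequalities is a sum over blocks, the first display extends to the full descriptor and to direct sums over modes and banks.

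\emph{Lower Lipschitz bound.} Since \(\|r\|,\|s\|\leq U\), every block of every point on the segment lies in the convex ball of radius \(U\). There the Hessian eigenvalues are at least \(\min\{1/(1+\rho),\log(1+\rho)/\rho\}\geq1/(1+U)\), so \(A\succeq(1+U)^{-1}I\). Hence \(\|\Delta_\Psi\|=\|A(r-s)\|\geq(1+U)^{-1}\|r-s\|\), and the upper bound was already established.

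\emph{Injectivity.} Since \(\|r\|,\|s\|\) are always finite, injectivity follows by taking \(U=\max\{\|r\|,\|s\|\}\). It also follows directly from strict monotonicity of \(f\) together with preservation of direction. Clearly \(\Psi_{\mathcal T}(0)=0\).

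The main obstacle is the nonsmooth point at the origin, where \(F\) is only \(C^1\) and the Hessian jumps. I would handle it through the segment-integral argument with a measure-zero exceptional set (or a limiting perturbation), relying on continuity of \(\nabla F\) and the uniform eigenvalue bounds near \(0\), where both eigenvalues tend to \(1\).
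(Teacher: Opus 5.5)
Your proof is correct and takes essentially the paper's route: the Jacobian eigenvalues $1/(1+|u|)$ and $\log(1+|u|)/|u|$ lie in $[(1+U)^{-1},1]$, you integrate along the segment, and the symmetric positive-semidefinite inequality $\|Jd\|_2^2\le\langle Jd,d\rangle$ gives the first display; your convex-potential (Baillon--Haddad) framing just makes explicit the Jacobian symmetry that the paper's proof uses tacitly. One small correction is that the origin is not actually a nonsmooth point, because the Hessian equals $\frac{\log(1+r)}{r}I+\bigl(\frac{1}{1+r}-\frac{\log(1+r)}{r}\bigr)\hat u\hat u^\top$ with the second coefficient $O(r)$, so $F$ is $C^2$ with Hessian $I$ at $0$ and your limiting argument there is harmless but unnecessary.
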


\paragraph{Proof.}
Identify \(\C\) with \(\R^2\) and write
\(\psi(u)=\log(1+|u|)u/|u|\). Its radial and tangential Jacobian
eigenvalues are
\[
\frac{1}{1+|u|},\qquad \frac{\log(1+|u|)}{|u|}.
\]
They extend to one at zero and lie in \([(1+U)^{-1},1]\) on
\(|u|\leq U\); the scalar energy coordinate has the same bounds. Since the
Euclidean ball is convex, \(s+td\) remains in this ball for \(t\in[0,1]\).
For \(d=r-s\), \(\Delta=\Psi_{\mathcal T}(r)-\Psi_{\mathcal T}(s)\), and the
block Jacobian \(J_t\) at \(s+td\),
\[
\|\Delta\|_2^2
\leq\int_0^1\|J_td\|_2^2dt
\leq\int_0^1\langle J_td,d\rangle dt
=\langle\Delta,d\rangle
\leq\|d\|_2^2.
\]
The lower Jacobian bound and Cauchy--Schwarz give
Equation~\eqref{eq:app-radial-log-bilipschitz}; strict radial monotonicity
(inverse radius \(e^r-1\)) gives injectivity. Direct sums complete the proof.

\subsection{Spectral embedding, generic separation, and consistency}
\label{app:spectral-proofs}

Under the main paper's equal-step, fully observed population model, its
energy and lag identities reduce the pole statistics to Poisson-kernel
integrals. Positive rescaling and Lemma~\ref{lem:app-radial-log-conditioning}
therefore preserve identification. Let
$d(\xi)=\mathcal P_{\mathbf F}(\xi)-\mathcal P_{\mathbf G}(\xi)$ on the connected
real-analytic parameter manifold $\Xi$, equipped with its smooth volume
measure. If $d$ vanishes on a positive-volume set, real analyticity implies
$d\equiv0$. For $c=a+ib$, let
$\mu_{\Delta,c}(A)=c^\top(\mathbf F-\mathbf G)(A)\bar c$ be the corresponding
finite complex Borel measure. Its Poisson integral is then zero. At any fixed radius $r\in(0,1)$,
\begin{equation}
P_{re^{i\varphi}}(\theta)
=\sum_{k\in\mathbb Z}r^{|k|}e^{ik(\theta-\varphi)}.
\label{eq:poisson-fourier}
\end{equation}
All multipliers are nonzero, so every Fourier--Stieltjes coefficient of
$\mu_{\Delta,c}$ vanishes and hence $\mu_{\Delta,c}=0$.

To recover the matrix measure, write
$\mathbf\Delta(A)=S+iA_0$ with $S$ real symmetric and $A_0$ real
skew-symmetric. For every orthonormal $a,b$ and $c=a+ib$,
\begin{equation}
c^\top\mathbf\Delta(A)\bar c
=a^\top Sa+b^\top Sb+2a^\top A_0b=0.
\label{eq:paired-matrix-projection}
\end{equation}
Subtracting the equation with $b$ replaced by $-b$ gives
$a^\top A_0b=0$ for every orthonormal pair $a,b$, and hence $A_0=0$.
The remaining identity is $a^\top Sa+b^\top Sb=0$. In an eigenbasis of $S$, this is
$\sigma_i+\sigma_j=0$ for $i\ne j$; three indices (and hence $D\ge3$)
force $S=0$. Thus $\mathbf F=\mathbf G$. The same argument applies at a
fixed damping radius when direction and frequency vary.

For pairwise distinct class spectra, each difference
$d_{ij}(a,b,p)=\mathcal P_{\mathbf F_i}(a+ib,p)-
\mathcal P_{\mathbf F_j}(a+ib,p)$ is analytic on the connected parameter
manifold and not identically zero. Its zero set therefore has measure zero;
a finite union over class pairs proves simultaneous single-mode separation
for almost every direction and pole.

When \(2M\leq D\), the separating pair can be completed by Gram--Schmidt to
an admissible $M$-mode frame. This is an existence statement, not a claim
that optimization finds that frame.

For a stationary ergodic process satisfying
\(\mathbb E|z_0^\star|^2<\infty\), Birkhoff's theorem gives
\(T^{-1}\sum_{t=1}^T|z_t^\star|^2\to Q_y\) almost surely. The zero-initialization
transient satisfies \(z_t^{(0)}-z_t^\star=-p^tz_0^\star\) and contributes
vanishing average energy. Thus \(\widehat Q_T\to Q_y\); a positive minimum
class margin gives eventual correctness and
\(\Pr(\widehat y_T\ne y)\to0\).

A finite-length bound requires bounded excitation and summable energy
autocovariance. With $|e_t|\leq B_e$ and $r=|p|<1$,
\begin{align}
Z&=\frac{|\gamma_h(\lambda)|B_e}{1-r},\\
\Gamma_y^{(E)}&=\operatorname{Var}(|z_0^\star|^2)
+2\sum_{k\geq1}\left|\operatorname{Cov}(|z_0^\star|^2,|z_k^\star|^2)\right|<\infty.
\end{align}
The transient discrepancy is bounded by
\begin{equation}
b_T=\frac{2Z^2}{T(1-r)}.
\end{equation}
With class-$y$ radius
\begin{equation}
\varrho_y
=
\min_{j\ne y}
\left|
\sqrt{(1+Q_y)(1+Q_j)}-(1+Q_y)
\right|,
\end{equation}
misclassification requires an additional deviation of at least
$\varrho_y-b_T$. Chebyshev's inequality gives, when $b_T<\varrho_y$,
\begin{equation}
\Pr_y(\widehat y_T\ne y)
\leq\frac{\Gamma_y^{(E)}}{T(\varrho_y-b_T)^2}.
\label{eq:finite-length-classification}
\end{equation}

\begin{proposition}[Finite-bank boundary]
\label{prop:finite-bank-boundary}
For any fixed finite pole bank and finite collection of energy and raw complex-autocorrelation moments, there exist
distinct strictly positive, real-analytic, full-support scalar spectral densities that produce
identical modal population descriptors.
\end{proposition}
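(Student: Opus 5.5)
Since every descriptor in the statement is a linear functional of the scalar spectral measure, the proposition reduces to finite-dimensional linear algebra. Fix the finite bank \(p_1,\ldots,p_M\) (with \(0<|p_m|<1\)) and the finite set of moments. By \eqref{eq:pole-energy-spectrum}--\eqref{eq:pole-lag-spectrum}, each energy or lag moment has the form
\[
L_j(\mu)=\int_{-\pi}^{\pi}k_j(\theta)\,\mu(d\theta),\qquad j=1,\ldots,J,
\]
where each \(k_j\) is a bounded real-analytic kernel of the form \(|\gamma_h|^2e^{i\tau\theta}/|e^{i\theta}-p_m|^2\). Raw complex autocorrelations \(\Gamma(k)\) for finitely many \(k\) are of the same type, with kernel \(e^{ik\theta}\). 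Taking real and imaginary parts gives at most \(2J\) real linear constraints on a real signed density \(\delta\).

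The plan is as follows. Start from a base density \(f_0\equiv1\), or any strictly positive analytic even density. Next, pick a finite-dimensional space of real-analytic even perturbations, for example \(\mathcal S=\operatorname{span}\{\cos(n\theta):1\le n\le N\}\) with \(N>2J\). The map \(\delta\mapsto(\operatorname{Re}L_j(\delta),\operatorname{Im}L_j(\delta))_j\) is linear from \(\mathcal S\) into \(\R^{2J}\). Since \(\dim\mathcal S>2J\), its kernel contains some nonzero \(\delta\). Rescale this \(\delta\) so that \(\|\delta\|_\infty<1/2\). Then \(f_1=f_0\) and \(f_2=f_0+\delta\) are distinct, strictly positive (since \(f_2\ge1/2\)), and real-analytic trigonometric polynomials with full support. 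They produce identical values for all \(L_j\), hence identical energies \(Q\), lag moments \(R_\tau\), and raw autocorrelations. Because the descriptor is a fixed function of these moments (via the map \(\Psi\) of Lemma~\ref{lem:app-radial-log-conditioning}), the full modal population descriptors coincide as well. Evenness keeps the densities valid for real scalar processes, since symmetric spectra give real \(\Gamma(k)\).

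One check is that the relevant kernel images are genuinely finite-dimensional constraints. This holds automatically because there are only \(J\) functionals, so no independence argument is needed.

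The main point requiring care is the \emph{strictly positive} requirement: the perturbation must be bounded, which is why a trigonometric polynomial with a sup-norm rescaling is the right choice. A second subtlety is that the descriptor uses finitely many lags \(\tau\in\mathcal T\) per pole, so \(J\le M(1+|\mathcal T|)\) plus the raw moments, and this count is finite as required. Optionally, one can also note that \(f_1\neq f_2\) yields different Poisson transforms at some pole outside the bank, by Theorem~\ref{thm:spectral-characteristic}. This contrasts the finite-bank failure with the continuum identifiability.
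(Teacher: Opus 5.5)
Your proposal is correct and follows the paper's own argument. Finitely many linear moment functionals on a large enough space of real even trigonometric polynomials have a nonzero common kernel element. Adding a small multiple of that element to a positive constant gives distinct, strictly positive, real-analytic spectra with identical moments, and hence identical descriptors after \(\Psi\). The paper uses the symmetric pair \(c\pm\epsilon h\) where you use \(f_0\) and \(f_0+\delta\); this difference is immaterial, and your explicit count \(N>2J\) and sup-norm rescaling spell out what the paper leaves as ``sufficiently large'' and ``sufficiently small.''
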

\begin{proof}
The finite moments define finitely many linear functionals on any
sufficiently large finite-dimensional space $\mathcal H$ of real even
trigonometric polynomials. A nonzero common-kernel element $h$ yields
$f_\pm=c\pm\epsilon h$ with identical measured moments for sufficiently
small $\epsilon>0$. These are distinct, strictly positive analytic spectra,
so the finite learned bank is not globally injective.
\end{proof}
This proposition rules out global injectivity over the full class of analytic
spectra, but does not preclude separation on finite task families or under the
generic conditions stated above.

\subsection{Cumulative-step and prediction-interface bounds}

For the exact-ZOH recurrence in the main paper, write
$T_n=\sum_{t=1}^nh_t$ and
$\gamma_h(\lambda)=(e^{\lambda h}-1)/\lambda$. Variation of constants
\cite{chen1999linear} gives the exact unroll
\begin{equation}
 z_{m,n}=e^{\lambda_mT_n}z_{m,0}
 +\sum_{j=1}^ne^{\lambda_m(T_n-T_j)}
 \gamma_{h_j}(\lambda_m)\chi_j^{(r)}e_{m,j}^{(r)}.
 \label{eq:app-exact-profile-unroll}
\end{equation}
Semi-orthogonality gives \(\|vA_r\|_2\leq\|v\|_2\). Applying this contraction to
Equation~\eqref{eq:app-exact-profile-unroll}, using
$|\chi_j^{(r)}|\leq1$, $\|e_j^{(r)}\|_2\leq V$, and
$|\gamma_h(\lambda_m)|\leq\int_0^h e^{-\alpha_m u}\,du$, and summing adjacent
physical-time intervals gives the main-paper bound
\[
\|z_n\|_2\leq e^{-\alpha_\star T_n}\|z_0\|_2+
\frac{V}{\alpha_\star}(1-e^{-\alpha_\star T_n}).
\]
This argument requires only $h_t\geq0$ and stable poles, not stationarity or equal spacing.

For FP32, the implementation substitutes $h$ for $\gamma_h(\lambda)$ when
$|\lambda h|<\delta_0=10^{-6}$. Its coefficient error is at most
$\tfrac12\delta_0e^{\delta_0}h$, and the same state bound holds after multiplying the input
term by $\chi_0=\delta_0/(1-e^{-\delta_0})$. This finite-precision qualification does not
strengthen the exact-arithmetic proposition.

\begin{proposition}[Synthesized-stream bound]
\label{prop:app-synthesized-stream-bound}
Assume $\|H^{(0)}_n\|_2\leq H$, $\|U^{(1)}_n\|_2\leq V$, and input gates have magnitude at most
one. In this proposition, \(\alpha_\star\) and \(z_0\) refer to the direct
bank. Define
\begin{equation}
B_n=e^{-\alpha_\star T_n}\|z_0\|_2+
\frac{V}{\alpha_\star}(1-e^{-\alpha_\star T_n}).
\end{equation}
Then the synthesized direct-bank trajectory satisfies
\begin{equation}
\|H^{(1)}_n\|_2\leq H+\|s_1\|_\infty
\bigl(B_n+\|\gamma_1\|_\infty V\bigr).
\label{eq:app-synthesized-stream-bound}
\end{equation}
\end{proposition}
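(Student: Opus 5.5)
The bound follows from the triangle inequality applied row-wise to line 9 of Algorithm~\ref{alg:alphabet}, which gives the synthesized stream at step \(n\) as
\[
H^{(1)}_n=H^{(0)}_n+s_1\odot\bigl([\Re z_n\mid\Im z_n]A_1^\top+\gamma_1\odot U^{(1)}_n\bigr).
\]
Taking norms, the first term contributes at most \(H\). For the second, we use \(\|s\odot v\|_2\leq\|s\|_\infty\|v\|_2\) to pull out \(\|s_1\|_\infty\). Inside the parentheses we split again by the triangle inequality, which leaves two pieces to bound: the synthesis term and the feedthrough term.

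\emph{Synthesis term.} We use \(A_1^\top A_1=I_{2M}\), so right-multiplication by \(A_1^\top\) is an isometry from \(\R^{2M}\) into \(\R^D\): \(\|vA_1^\top\|_2^2=vA_1^\top A_1v^\top=\|v\|_2^2\). Stacking real and imaginary parts also preserves the norm, \(\|[\Re z_n\mid\Im z_n]\|_2=\|z_n\|_2\). Hence this term has norm exactly \(\|z_n\|_2\).

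To bound \(\|z_n\|_2\) we invoke Proposition~\ref{prop:stability} for the direct bank. Its hypothesis is \(\|e_t\|_2\leq V\), where the excitation is \(e_t=U^{(1)}_tA_1\) reinterpreted as a complex vector in \(\C^M\). Since \(A_1\) has orthonormal columns, \(\|U^{(1)}_tA_1\|_2\leq\|U^{(1)}_t\|_2\leq V\). The complex identification is again norm-preserving, so \(\|e_t\|_2\leq V\) holds. The injection gates \(\chi^{(1)}_t\in\{0,1\}\), and more generally any gates of magnitude at most one, only shrink the input term in the unroll~\eqref{eq:app-exact-profile-unroll}, so the proposition still applies. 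It gives \(\|z_n\|_2\leq B_n\). This step uses only stability of the poles and \(h_t\geq0\), with \(\alpha_\star\) and \(z_0\) taken from the direct bank.

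\emph{Feedthrough term.} Again \(\|\gamma_1\odot U^{(1)}_n\|_2\leq\|\gamma_1\|_\infty\|U^{(1)}_n\|_2\leq\|\gamma_1\|_\infty V\).

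Combining the three pieces gives~\eqref{eq:app-synthesized-stream-bound}. The main point needing care is the norm bookkeeping between \(\C^M\) and \(\R^{2M}\). The argument relies on \(A_1\) being norm-preserving under \(A_1^\top\) and contractive under \(A_1\), and on the excitation bound in Proposition~\ref{prop:stability} being measured in the same Euclidean norm as \(U^{(1)}_t A_1\). Once that is verified, the remaining steps are direct applications of the triangle inequality.
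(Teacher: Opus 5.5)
Your proposal is correct and matches the paper's own (terse) proof. Semi-orthogonality bounds the direct-bank excitation by \(V\), Proposition~\ref{prop:stability} with gates of magnitude at most one gives \(\|z_n\|_2\leq B_n\), and the triangle inequality on the synthesis line gives the bound; you additionally spell out the isometry of \(A_1^\top\) and the \(\|s\odot v\|_2\leq\|s\|_\infty\|v\|_2\) steps that the paper leaves implicit.
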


\begin{proof}
Semi-orthogonality bounds the excitation by \(V\) and the modal state by
\(B_n\); the synthesis triangle inequality proves
Equation~\eqref{eq:app-synthesized-stream-bound}. Because of the identity
residual, this internal-stream bound remains input-relative. The FP32 branch
replaces \(B_n\) by
$B_n^{\mathrm{FP32}}=e^{-\alpha_\star T_n}\|z_0\|_2+
\chi_0V(1-e^{-\alpha_\star T_n})/\alpha_\star$.
\end{proof}

For bank \(r\), RMS normalization and semi-orthogonal analysis give
\[
\|e_t^{(r)}\|_2\leq\kappa_r:=\sqrt D\|\eta_r\|_\infty.
\]
With zero initialization, the modewise form of the state bound gives
$|z_{m,t}^{(r)}|\leq B_{r,m}:=\kappa_r/\alpha_m^{(r)}$. Consequently,
$0\leq R_{m,0}^{(r)}\leq B_{r,m}^2$, and Cauchy--Schwarz gives
$|R_{m,\tau}^{(r)}|\leq B_{r,m}^2$ for every selected token lag. Each of the
$1+|\mathcal T|$ radial-log blocks therefore contributes at most
$\log^2(1+B_{r,m}^2)$ per mode. Summing over both banks proves
\[
\|g(X)\|_2^2\leq
(1+|\mathcal T|)\sum_{r=1}^2\sum_{m=1}^M
\log^2(1+B_{r,m}^2)=C_g^2.
\]
The affine head therefore obeys
$\|f_\Theta(X)\|_2\leq\|W_{\rm head}\|_2C_g+\|b_{\rm head}\|_2$.
Under the stated normalization, stability, and parameter bounds, the descriptor norm is
independent of raw-input amplitude and token count; validity masks with no
eligible pairs contribute zero by the denominator convention in the main-paper moment definition.

\begin{proposition}[Trajectory-to-descriptor perturbation]
\label{prop:app-trajectory-perturbation}
For the implemented lag set $\mathcal T=\{1,2,4\}$ and bank $r\in\{1,2\}$,
suppose two modal trajectories use the same validity weights and satisfy
$\|z^{(r)}_t\|_2,\|z^{\prime(r)}_t\|_2\leq B_r$ and
$\|z^{(r)}_t-z^{\prime(r)}_t\|_2\leq\varepsilon_r$ for every token. Then their joint radial-log
descriptors obey
\begin{equation}
\|g-g'\|_2
\leq4\sqrt{B_1^2\varepsilon_1^2+B_2^2\varepsilon_2^2}.
\label{eq:app-trajectory-descriptor-perturbation}
\end{equation}
Consequently the affine scores differ by at most
\[
\|f_\Theta-f'_\Theta\|_2
\leq\|W_{\rm head}\|_2\|g-g'\|_2.
\]
\end{proposition}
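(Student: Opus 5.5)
The plan is to reduce everything to the raw moments with Lemma~\ref{lem:app-radial-log-conditioning}, then bound the moment differences by a product-rule splitting, and finally sum over modes, lags and banks. By the upper inequality in Equation~\eqref{eq:app-radial-log-nonexpansive} applied modewise and extended by direct sums, $\|g-g'\|_2\leq\|R-R'\|_2$. Here $R$ collects all moments $R^{(r)}_{m,\tau}$, for $r\in\{1,2\}$, $m\leq M$ and $\tau\in\{0\}\cup\mathcal T=\{0,1,2,4\}$, with complex values identified with $\R^2$. The energy coordinate $R_{m,0}\geq0$ lies in the domain of $\log(1+\cdot)$, so the lemma applies verbatim. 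It therefore suffices to show that, for each bank and each of the four lags,
\[
\sum_{m=1}^M|R^{(r)}_{m,\tau}-R'^{(r)}_{m,\tau}|^2\leq4B_r^2\varepsilon_r^2 .
\]
Summing over the four lags gives $16B_r^2\varepsilon_r^2$ per bank, and summing over banks and taking square roots yields the constant $4$ in Equation~\eqref{eq:app-trajectory-descriptor-perturbation}.

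For fixed $r,\tau$, write $\omega_t=w_{t,\tau}/\max\{1,\sum_s w_{s,\tau}\}$. These weights are shared by both trajectories, are nonnegative, and satisfy $\sum_t\omega_t\leq1$. With $\delta_{m,t}=z_{m,t}-z'_{m,t}$, split
\[
z_{m,t}\overline{z_{m,t-\tau}}-z'_{m,t}\overline{z'_{m,t-\tau}}
=\delta_{m,t}\overline{z_{m,t-\tau}}+z'_{m,t}\overline{\delta_{m,t-\tau}} .
\]
Since the weights have total mass at most one, Cauchy--Schwarz gives $|\sum_t\omega_tx_t|^2\leq\sum_t\omega_t|x_t|^2$, and $(x+y)^2\leq2x^2+2y^2$ then gives
\[
|\Delta R_{m,\tau}|^2\leq2\sum_t\omega_t\bigl(|\delta_{m,t}|^2|z_{m,t-\tau}|^2+|z'_{m,t}|^2|\delta_{m,t-\tau}|^2\bigr).
\]

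The key step is summing over modes before using the token-level hypotheses, since those hypotheses are stated for the full mode vector $z_t\in\C^M$. I would use $\sum_m a_mb_m\leq(\sum_m a_m)(\sum_m b_m)$ for nonnegative terms. This gives $\sum_m|\delta_{m,t}|^2|z_{m,t-\tau}|^2\leq\|\delta_t\|_2^2\|z_{t-\tau}\|_2^2\leq\varepsilon_r^2B_r^2$, and the same bound holds for the primed term. Combining this with $\sum_t\omega_t\leq1$ yields $\sum_m|\Delta R_{m,\tau}|^2\leq4\varepsilon_r^2B_r^2$, as required.

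Pairs that are out of range, and masks with no eligible pairs, contribute zero through the denominator convention, so no case split is needed. The score bound then follows directly: the two affine outputs share the same bias, so $f_\Theta-f'_\Theta=W_{\rm head}(g-g')$, which gives $\|f_\Theta-f'_\Theta\|_2\leq\|W_{\rm head}\|_2\|g-g'\|_2$.

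The only delicate point I anticipate is the constant bookkeeping. One must apply Jensen's inequality to the sub-probability weights rather than to a normalized average, and sum over modes before invoking the vector bounds $B_r$ and $\varepsilon_r$. If the inequalities are applied in the other order, the bound picks up spurious factors of $M$.
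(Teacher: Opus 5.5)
Your proof is correct and follows essentially the same route as the paper's. Both arguments bound each lag block of moment differences by $2B_r\varepsilon_r$ using the product splitting, the mode-vector norm hypotheses, the sub-probability averaging weights and the non-expansiveness in Lemma~\ref{lem:app-radial-log-conditioning}, then concatenate four blocks and two banks to obtain the constant $4$. The difference is cosmetic: you work with squared quantities and sum over modes before invoking $B_r,\varepsilon_r$, whereas the paper applies the Hadamard-product inequality $\|x\odot y\|_2\le\|x\|_2\|y\|_2$ to each token's mode vector and then uses the triangle inequality for the weighted average.
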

\begin{proof}
Componentwise squaring and the Hadamard-product inequality give
\[
\begin{aligned}
\bigl\||z_t|^2-|z'_t|^2\bigr\|_2
&\leq2B_r\varepsilon_r,\\
\|z_t\odot\bar z_{t-\tau}-z'_t\odot\bar z'_{t-\tau}\|_2
&\leq2B_r\varepsilon_r.
\end{aligned}
\]
Averaging and Lemma~\ref{lem:app-radial-log-conditioning} preserve these bounds,
so each block differs by at most \(2B_r\varepsilon_r\). Concatenating one energy
block and three complex-lag blocks gives
\[
\|g^{(r)}-g^{\prime(r)}\|_2\leq4B_r\varepsilon_r.
\]
Concatenating the two banks proves
Equation~\eqref{eq:app-trajectory-descriptor-perturbation}; a coordinatewise
bound \(\varepsilon_r^{\rm coord}\) implies
\(\varepsilon_r=\sqrt M\,\varepsilon_r^{\rm coord}\).
\end{proof}

\section{Model Implementation}
\label{app:implementation-details}

The two banks use
\(A_r=[A_{r,\Re}\mid A_{r,\Im}]\), \(A_r^\top A_r=I_{2M}\), maintained by a
matrix-exponential Stiefel parameterization \cite{lezcano2019orthogonal}.
Independently for each bank,
\begin{align*}
\alpha_m&=10^{-3}+(2-10^{-3})\operatorname{sigmoid}(\delta_m),\\
\omega_m&=\pi\tanh(\vartheta_m),
\end{align*}
initialized with \(\delta=\operatorname{linspace}_M(-3,1)\) and
\(\vartheta=\operatorname{atanh}(\operatorname{linspace}_M(0,.75))\); modal
states start at zero.

Each local lift is a centered depthwise kernel of size 5 and dilation 4,
followed by learned bias and SiLU \cite{elfwing2018silu}. Only the direct bank
synthesizes through \(A_1^\top\). With \(\mathcal T=\{1,2,4\}\), the affine
head has \(K(14M+1)\) parameters and no pooled \(D\)-dimensional path.

On equal-step inputs, \(\mathcal T=\{1,2,4\}\) denotes token offsets shared by
both banks. When \texttt{time\_delta} is supplied, the implementation switches
both local lifts and modal readout to fixed physical-time offsets in the
normalized units of \texttt{time\_delta}: local kernels sample the
piecewise-linear signal interpolant, and modal moments query/interpolate states
at \(t-\tau\). The exact-ZOH transition uses the same elapsed-time metadata.
Validity masks gate the interpolation support, and an empty support set returns
zero. Thus the equal-step equations are the unit-grid special case, whereas
irregular runs use physical-time interpolation/quadrature.

For fixed \(D,M,K\), the two width-5 lifts cost \(O(TD)\), the two diagonal pole
recurrences and direct-bank synthesis cost \(O(TMD)\), and the fixed lag set costs
\(O(TM)\). The terminal affine head costs \(O(MK)\). Memory can be streamed with
the current modal states and fixed-lag buffers, so the sequence-dependent
compute is \(O(T)\); the implemented training path may retain trajectories for
automatic differentiation without changing that arithmetic complexity.

\subsection{Systems efficiency audit}
\label{app:efficiency}
\label{app:optimization-details}
\subsubsection{Final radial-log kernel specialization}
\label{app:final-kernel-specialization}

\paragraph{Scope and dispatch.}
The optimized target is the final radial-log affine \model{} with
\((C,D,M,K)=(2,64,16,5)\). The specialized path is used only for
no-gradient CUDA inference with static poles, FP32 tensors,
\(B\in\{32,64\}\), and \(1\leq T\leq2048\). Unsupported shapes, masks,
variable-step metadata, and non-FP32 inputs use the reference graph.
Framework-level TF32 is disabled; the packed projection kernel explicitly
uses Triton's \texttt{tf32x3} decomposition and is validated numerically
against the FP32 reference.

\paragraph{Associative time-parallel scan.}
For one mode, write a recurrence step as
\[
\begin{aligned}
g_{m,t}(z)&=a_{m,t}z+b_{m,t},\\
(a_2,b_2)\star(a_1,b_1)
&=(a_2a_1,\;a_2b_1+b_2).
\end{aligned}
\]
Because \(\star\) is associative, a parallel prefix scan produces the same
states as the sequential recurrence. In the specialized path,
\(a_{m,t}=p_m\) is static, and the Triton kernel evaluates the prefixes with
\texttt{tl.associative\_scan}. Each bank therefore retains \(O(T)\) work
with \(O(\log T)\) parallel scan depth.

This is a parallel affine scan rather than Mamba's selective scan: the
specialized transition is static, and the direct and cascaded banks remain
serially composed because the latter depends on the synthesized direct
trajectory. The scan also accumulates the seven per-mode statistics: log
energy and the real and imaginary radial-log moments at lags
\(\{1,2,4\}\). Parallel reassociation changes FP32 rounding order, so the
specialized path is numerical-parity gated rather than required to be
bitwise identical to the sequential reference.

\paragraph{Fused inference dataflow.}
The supported inference path is
\[
\begin{aligned}
\text{edge lift}
&\longrightarrow \text{writer scan + moments}\\
&\longrightarrow \text{synthesis + reader drive}\\
&\longrightarrow \text{reader moments-only scan}
\longrightarrow \text{affine logits}.
\end{aligned}
\]
The writer scan retains its modal trajectory because synthesis through
\(A_1^\top\) is required. A fused producer constructs the reader drive from
the synthesized direct trajectory, the residual stream, and the reader lift.
The reader then uses a moments-only parallel scan that does not materialize
or export its full state trajectory, returning the \(7M\) radial-log
descriptor directly. A final Triton kernel applies the affine classifier to
the two \(7M\) descriptors without a separate concatenation or linear launch.
These fusions preserve the main-paper forward map while reducing intermediate
global-memory traffic and kernel launches.

\paragraph{Complete-step training path.}
Training uses differentiable time-parallel scans for the forward states and
reverse adjoints, together with fused edge and terminal-reader operations.
The model and loss are compiled, and gradient-norm reduction, clipping, and
AdamW are combined in a fused optimizer tail. A CUDA~12.8 graph captures the
complete training step, including forward/loss, backward, optimization, and
post-update frame constraints. The inference graph similarly captures one
supported static-shape forward pass. Compilation, graph construction, and
warmup are excluded from the reported steady-state measurements.

\paragraph{Systems audit protocol.}
After freezing the specialization, the runtime audit compares
repository-native CNN1D, TCN, Transformer, S4D, S5, LRU, GRU, and LSTM with
official \texttt{mamba-ssm} 2.3.2.post1 using
\texttt{causal-conv1d} 1.6.1 and its fast path. The audit environment uses
PyTorch 2.9.1+cu128, Triton 3.5.1, Python 3.13.14, CUDA~12.8, and a single
24\,GiB RTX~4090.

The runtime audit uses two capacity controls, distinct from the task-level
18-candidate selection protocol. The \emph{approximately parameter-matched}
control selects, for each family, the nearest realizable variant from its
predeclared structural choices to the parameter count of the reference
\model{} design. Here \(D\) denotes the width of the reference design; a
matched baseline may therefore use a different native width. No inactive
parameters are added to force exact equality. The resulting 81 comparisons
cover nine one-factor shapes:
\[
\begin{aligned}
B&\in\{1,8,32,64\},       &&(T,D)=(512,64),\\
T&\in\{128,512,2048\},    &&(B,D)=(32,64),\\
D&\in\{16,32,64,128\},    &&(B,T)=(32,512).
\end{aligned}
\]
Inference uses the specialized kernel in the four \(D=64\) rows with
\(B\in\{32,64\}\), giving 36 baseline comparisons; the remaining five rows
use the reference graph, giving 45 comparisons. Complete-step measurements
use the differentiable training runtime at every shape. The maximum
parameter-count gap is 5.3\%, and all other gaps are below 2.5\%. This is an
approximate parameter control rather than a FLOP-, activation-, or
memory-matched comparison. The reported speedups should therefore be
interpreted as empirical results under the stated parameter and workload
controls.

The separate \emph{same-width} control fixes the external hidden width at
\(D=64\) and the workload at \((B,T)=(32,512)\), while retaining each
family's native internal structure. Its nine comparisons intentionally do
not match parameter counts and are reported separately from the
parameter-matched aggregate.
\begin{table*}[t]
\centering
\small
\setlength{\tabcolsep}{1mm}
\begin{tabular*}{\textwidth}{@{\extracolsep{\fill}}lrrrrrrrrr@{}}
\toprule
Design \(B/T/D\) & CNN & TCN & Trf. & Mamba & S4D & S5 & LRU & GRU & LSTM\\
\midrule
Param. 1/512/64 & 1.05/0.41 & 0.99/0.34 & 18.47/25.79 & 1.88/0.64 & 5.88/1.34 & 5.45/1.29 & 3.90/1.00 & 21.00/18.81 & 11.14/2.02\\
Param. 8/512/64 & 0.21/0.58 & 0.21/0.51 & 3.97/23.42 & 0.41/0.62 & 1.17/1.36 & 1.20/1.37 & 0.85/0.98 & 4.28/16.24 & 2.06/1.81\\
Param. 32/128/64 & 0.83/0.55 & 0.96/0.44 & 14.36/25.06 & 1.30/0.61 & 4.26/1.33 & 2.77/0.96 & 2.19/0.79 & 6.72/16.95 & 2.99/0.90\\
Param. 32/512/16 & 0.27/0.54 & 0.53/0.73 & 10.83/19.75 & 0.62/0.64 & 1.08/0.83 & 1.68/1.17 & 1.13/0.93 & 6.19/16.37 & 5.03/2.74\\
Param. 32/512/32 & 0.39/0.65 & 0.40/0.50 & 8.96/14.93 & 0.59/0.58 & 1.54/1.10 & 2.07/1.43 & 2.37/1.51 & 7.26/16.88 & 3.89/1.98\\
Param. 32/512/64 & 0.50/0.60 & 0.60/0.48 & 14.13/18.29 & 1.07/0.77 & 2.58/1.38 & 2.66/1.46 & 1.78/0.95 & 6.63/3.32 & 3.64/1.54\\
Param. 32/512/128 & 0.41/0.44 & 0.46/0.39 & 4.34/6.69 & 0.83/0.66 & 1.12/0.89 & 3.78/2.99 & 2.57/1.80 & 2.23/5.50 & 3.04/7.72\\
Param. 32/2048/64 & 0.34/0.48 & 0.38/0.30 & 44.38/20.20 & 1.09/0.92 & 1.72/1.42 & 3.17/2.34 & 1.62/1.30 & 7.11/5.48 & 3.84/2.47\\
Param. 64/512/64 & 0.42/0.48 & 0.48/0.39 & 16.67/12.01 & 1.02/0.85 & 1.99/1.40 & 2.20/1.41 & 1.39/0.88 & 4.17/8.38 & 2.39/1.20\\
\midrule
Same width 32/512/64 & 1.68/1.59 & 2.42/1.89 & 16.21/17.47 & 2.90/1.84 & 3.27/1.84 & 4.09/2.37 & 4.08/2.30 & 14.87/16.09 & 12.99/14.35\\
\bottomrule
\end{tabular*}
\caption{Complete 90-cell systems ledger. Each entry is
inference/complete-step speedup (baseline latency divided by ALPHABET
latency); values above one favor ALPHABET. ``Param.'' denotes approximately
parameter-matched comparisons (maximum gap 5.3\%; all but one at most 2.49\%).
The final row fixes \(D=64\) at \((B,T)=(32,512)\) without matching parameter
counts. Each cell is the median of seven paired CUDA-event groups.}
\label{tab:full90-shape-ledger}
\end{table*}

\paragraph{Measurement protocol.}
Correctness-valid runtime candidates are frozen by median latency at the
off-grid \((4,256,64)\) calibration shape. TF32 and autocast are disabled;
a repeat calibration targets 25\,ms and is excluded. Five warmups precede seven
seeded balanced ABBA/BAAB groups. Inference is forward-only; a complete step
includes cross entropy, backward, global gradient clipping, AdamW, and model
constraints. Each family uses its fastest correctness-valid calibrated path, and the
manifest records authoritative imports.

\paragraph{Parity and retained evidence.}
Maximum logit error is \(8.345\times10^{-7}\). Input/parameter gradients and
three-step optimization trajectories pass at both anchors for seeds
\(\{7,11,19\}\), with additional gates for \(T=2048\), \(T=31\), seven modes,
zero drive, finite differences, and NaN/Inf propagation. The fail-closed
140-group evaluator reports \texttt{status=pass}. Table entries are medians over
seven paired groups; grid summaries are computed from the 90 cell medians
(180 phase rows and 1,260 timing groups).
\section{Evaluation Protocol}
\label{app:maximal-fairness}
\label{app:protocol}

The nine control families are CNN1D, TCN, Transformer, Mamba, S4D, S5,
LRU, GRU, and LSTM. Candidate counts and final-seed counts are matched across
families: every task--family cell uses 18 candidates and five final-training
seeds. Native parameter counts, FLOPs, optimizer steps, and wall-clock costs
are not matched. TCN denotes the disclosed causal dilated CNN control rather
than the canonical two-convolution residual TCN. The capacity schedule for the
proposed \model{} is specified separately from the control-family ledger.

\paragraph{Proposed-model capacity schedule.}
For the complete benchmark, \model{} uses the six capacity pairs
\[
(D,M)\in
\{(32,8),(32,16),(64,16),(64,32),(128,16),(128,32)\},
\]
each combined with the three common optimizer recipes. This gives 18
candidates per task--model cell. The schedule is part of the proposed model's
benchmark specification and is not treated as an additional control-family
row.

\subsection{Model configuration ledger}
\label{app:experiment-ledger}

Table~\ref{tab:model-config-ledger} lists the structural configurations
available to each control family. The common candidate-selection,
optimization, and TEST-sealing rules are given below; diagnostic-specific
budgets are stated with their corresponding protocols.

\begin{table}[t]
\centering
\small
\setlength{\tabcolsep}{1mm}
\begin{tabular*}{\columnwidth}{@{\extracolsep{\fill}}lll@{}}
\toprule
Family group & Component & Choices \\
\midrule
CNN1D & depth/kernel & \((2,3)\), \((4,5)\) \\
TCN & depth/kernel & \((3,3)\), \((5,5)\) \\
Transformer & depth/heads & \((1,2)\), \((2,4)\) \\
Mamba & state/conv & \((16,3)\), \((32,4)\) \\
S4D & depth/state & \((1,16)\), \((3,16)\) \\
S5/LRU/GRU/LSTM & depth/state & \((1,16)\), \((2,32)\) \\
\bottomrule
\end{tabular*}
\caption{Control-family configuration ledger. Each family uses widths
\(\{32,64,128\}\), the two listed structural choices, and recipes A/B/C,
giving 18 candidates per task--family cell. Recipes A/B/C respectively use
the learning-rate and clipping pairs
\((10^{-3},0.5)\), \((3{\times}10^{-3},1)\), and
\((10^{-2},2)\), with weight decay \(10^{-4}\) and effective batch size 64.
Width 128 uses microbatches of 32 with two-step gradient accumulation; the
other widths use batches of 64.}
\label{tab:model-config-ledger}
\end{table}

Stage~1 evaluates all 18 candidates at seed 7 and retains the validation
Top-6. Stage~2 evaluates those six candidates at seeds 11 and 19 and freezes
the configuration with the highest mean validation score over seeds
\(\{7,11,19\}\). Exact ties are resolved using the predeclared configuration
key. Regular-sequence tasks use at most 100 epochs. Final training uses seeds
\(\{23,31,43,47,59\}\), and official TEST data remain sealed until
configuration selection is complete.

\subsection{Tasks, objectives, and data sealing}

The fixed registry contains \BenchmarkTaskCount{} independently trained tasks:
\BenchmarkUCRTaskCount{} UCR datasets \cite{dau2019ucr},
\BenchmarkGeneralTaskCount{} general sequence, ECG, clinical/activity, and
forecasting tasks, and \BenchmarkUEAFaultTaskCount{} multivariate and
vibration/fault tasks, including UEA \cite{bagnall2018uea}. Registry
membership was determined using public data availability, reproducible split
and preprocessing requirements, compatibility with the common sequence
interface, and broad coverage across task and application types. The registry
is neither exhaustive nor intended to be statistically representative of all
time-series problems.

Non-UCR tasks use public releases and frozen manifests: PTB-XL and MIT-BIH ECG
\cite{wagner2020ptbxl,moody2001mitbih,dechazal2004heartbeat},
CWRU bearing signals \cite{smith2015cwru}, ETT, Electricity, and Weather
\cite{zhou2021informer,trindade2015electricity,mpibgcweather}, Sequential
CIFAR-10 \cite{krizhevsky2009cifar}, and balanced AudioSet VGGish features
\cite{gemmeke2017audioset,hershey2017audio,audiosetfeatures}. The remaining
registry sources are PhysioNet 2012/2019
\cite{silva2012physionet,reyna2020sepsis}, PAMAP2
\cite{reiss2012pamap2,reiss2012pamap2dataset}, UCI Localization Human Activity
\cite{uci2008localization}, USHCN as processed by GRU-ODE-Bayes
\cite{debrouwer2019gruode}, ISRUC-Sleep \cite{khalighi2016isruc},
Chapman-Shaoxing ECG \cite{zheng2020chapman}, and MFPT and Paderborn bearing
data \cite{mfptdataset,lessmeier2016paderborn}. No dataset was added or removed
because of observed \model{} or baseline performance. Released manifests fix
the task-specific splits and preprocessing procedures.

UCR and UEA configuration selection uses only official TRAIN data; official
TEST data remain sealed until final evaluation. Forecasting windows do not
cross chronological TRAIN/validation/TEST boundaries. Other external tasks
retain their predefined record-, group-, or patient-level splits. Selection
artifacts contain no TEST tensors or TEST-set identifiers. Normalization and
all other data-dependent preprocessing are fitted without access to TEST data.
\section{Diagnostic Protocols}
\label{app:finite-bank-margin}

\paragraph{Common protocol.}

These TRAIN-only diagnostics test whether a finite bank separates temporal
dependence when static first and second moments agree. UCR experiments use a
stratified 80/20 official-TRAIN split, fit preprocessing on the optimization
fold, and never load official TEST. Unless stated otherwise, we use
\((D,M)=(64,16)\), 100 epochs, seeds \(\{7,11,19\}\), and confirmatory
trial~4: AdamW with learning rate \(3{\times}10^{-3}\), weight decay
\(10^{-4}\), effective batch size 64, and gradient clipping at 1. The
synthetic control uses 60 epochs.

\subsection{Moment-matched spectral control}

Class 0 is \(X_t=Z_t\), and class 1 is
\[
X_t=aZ_t+bZ_{t-5},\qquad
a^2+b^2=1,\qquad 2ab=\epsilon,
\]
for independent standard Gaussian innovations. Both are stationary,
Gaussian, and unit variance; their population autocovariances agree for
\(|k|\leq4\) and first differ at lag 5.

We sweep \(\epsilon\in\{.1,.2,.4,.8\}\) over seeds
23, 31, 43, 47, and 59, using 512 optimization and 256 validation paths of
length 128 per seed. Four estimators share the same realized samples: the
exact Gaussian Bayes likelihood ratio, a nearest-prototype classifier using
only empirical raw \(\widehat\Gamma(0{:}4)\), the capacity-matched energy-only
model, and the full radial-log model. The neural models use \((D,M)=(64,16)\)
and 60 epochs. The main-paper diagnostic reports the resulting validation
curves; this appendix records the protocol and TEST exclusion only.

\subsection{Capacity-matched structural ablation}

The energy-only control uses the same 17 datasets, seeds, two-bank backbone,
optimization recipe, TRAIN-derived 80/20 split, and selection rule as the
other ablations. It gives a nonlinear MLP only the direct and cascaded
lag-zero energies and matches the complete model's trainable count within
\(0.2\%\). The two-scan comparison gives a one-scan baseline the same 224 real
coordinates with \(M'=32\) and equalizes trainable parameters with \(D'=72\).
The main-paper ablation reports the resulting task-paired comparison.

\subsection{Pole-coordinate audit}

We audit learned- and fixed-random-pole checkpoints from the same 17-task
campaign at seeds \(\{23,31,43,47,59\}\). Using the affine margin identity,
we rank the 32 bank--pole blocks by optimization-fold mean absolute
contribution. For each checkpoint, neutralization replaces a selected
seven-coordinate block by its optimization-fold mean, while retention keeps
only selected blocks. Top and bottom subsets use the frozen ranking; random
entries average 64 uniformly sampled subsets per checkpoint. The main-paper
attribution audit reports the resulting validation contrasts; no official TEST
data are used here.
\subsection{Synthetic-corruption protocol}
\label{app:boundary-results}

We train \model{} and nine controls on 17 UCR tasks and seeds
\(\{23,31,43,47,59\}\), using configurations frozen from TRAIN-derived
validation. The \(10\times17\times5=850\) checkpoints are evaluated without
retraining under Gaussian noise \(\sigma\in\{.1,.2\}\), amplitude multipliers
\(\{.5,1.5\}\), and half-rate downsampling followed by restoration to the
original length. The main paper reports the corruption outcomes; this section
records the evaluation contract and TEST-free configuration freeze.
\section{Complete TEST Results}
\label{app:complete-test-results}


\begin{table*}[t]
\centering
\small
\setlength{\tabcolsep}{1mm}
\renewcommand{\arraystretch}{0.94}
\begin{tabular}{@{}lrrrrr@{}}
\toprule
Dataset & ALPHABET & CNN1D & TCN & Mamba & S4D\\
\midrule
ACSF1 (Bal. Acc. $\uparrow$) & \textbf{0.788 (0.020)} & 0.718 (0.059) & 0.682 (0.029) & 0.750 (0.019) & 0.680 (0.031)\\
Adiac (Bal. Acc. $\uparrow$) & 0.536 (0.148) & 0.407 (0.118) & 0.455 (0.017) & 0.492 (0.064) & \textbf{0.638 (0.061)}\\
ArrowHead (Bal. Acc. $\uparrow$) & \textbf{0.630 (0.040)} & 0.543 (0.132) & \underline{0.556 (0.029)} & 0.474 (0.078) & 0.501 (0.046)\\
BME (Bal. Acc. $\uparrow$) & \textbf{0.995 (0.006)} & 0.752 (0.020) & 0.815 (0.024) & 0.932 (0.057) & 0.873 (0.063)\\
CinCECGTorso (Bal. Acc. $\uparrow$) & \underline{0.674 (0.052)} & 0.638 (0.138) & 0.309 (0.052) & 0.557 (0.045) & 0.326 (0.027)\\
Coffee (Bal. Acc. $\uparrow$) & \textbf{1.000 (0.000)} & 0.865 (0.162) & \textbf{1.000 (0.000)} & \underline{0.967 (0.000)} & 0.887 (0.098)\\
CricketX (Bal. Acc. $\uparrow$) & 0.683 (0.018) & \underline{0.739 (0.042)} & 0.673 (0.041) & 0.676 (0.014) & \textbf{0.747 (0.009)}\\
Crop (Bal. Acc. $\uparrow$) & 0.726 (0.006) & \underline{0.742 (0.004)} & 0.733 (0.003) & 0.706 (0.004) & 0.738 (0.003)\\
ECG200 (Bal. Acc. $\uparrow$) & 0.831 (0.037) & \underline{0.832 (0.029)} & \textbf{0.845 (0.028)} & 0.776 (0.039) & 0.811 (0.025)\\
ECG5000 (Bal. Acc. $\uparrow$) & 0.534 (0.010) & 0.518 (0.026) & 0.524 (0.006) & 0.513 (0.024) & 0.529 (0.015)\\
ECGFiveDays (Bal. Acc. $\uparrow$) & 0.830 (0.122) & 0.818 (0.025) & \textbf{0.968 (0.036)} & 0.818 (0.040) & 0.737 (0.010)\\
EOGHorizontalSignal (Bal. Acc. $\uparrow$) & 0.560 (0.030) & 0.512 (0.055) & 0.469 (0.043) & 0.492 (0.040) & \textbf{0.599 (0.027)}\\
Earthquakes (Bal. Acc. $\uparrow$) & 0.526 (0.035) & 0.500 (0.000) & 0.500 (0.000) & 0.500 (0.000) & 0.500 (0.000)\\
FordA (Bal. Acc. $\uparrow$) & \textbf{0.953 (0.005)} & 0.930 (0.018) & \underline{0.947 (0.009)} & 0.916 (0.009) & 0.945 (0.003)\\
FordB (Bal. Acc. $\uparrow$) & \textbf{0.823 (0.017)} & 0.804 (0.010) & \underline{0.821 (0.034)} & 0.784 (0.030) & 0.803 (0.011)\\
GunPoint (Bal. Acc. $\uparrow$) & \underline{0.982 (0.010)} & \textbf{0.999 (0.003)} & 0.954 (0.015) & 0.927 (0.042) & 0.966 (0.007)\\
InsectEPGRegularTrain (Bal. Acc. $\uparrow$) & \textbf{1.000 (0.000)} & \textbf{1.000 (0.000)} & \textbf{1.000 (0.000)} & \textbf{1.000 (0.000)} & \textbf{1.000 (0.000)}\\
InsectWingbeatSound (Bal. Acc. $\uparrow$) & 0.511 (0.014) & 0.367 (0.033) & 0.566 (0.008) & 0.439 (0.016) & \textbf{0.611 (0.013)}\\
ItalyPowerDemand (Bal. Acc. $\uparrow$) & \textbf{0.944 (0.018)} & 0.882 (0.024) & 0.761 (0.139) & 0.672 (0.147) & 0.917 (0.020)\\
MoteStrain (Bal. Acc. $\uparrow$) & 0.879 (0.026) & 0.825 (0.043) & 0.828 (0.025) & 0.849 (0.015) & 0.872 (0.010)\\
Plane (Bal. Acc. $\uparrow$) & \underline{0.996 (0.005)} & \textbf{1.000 (0.000)} & 0.944 (0.037) & 0.995 (0.007) & 0.959 (0.013)\\
PowerCons (Bal. Acc. $\uparrow$) & \underline{0.983 (0.012)} & 0.830 (0.056) & 0.978 (0.004) & 0.963 (0.009) & 0.979 (0.014)\\
Rock (Bal. Acc. $\uparrow$) & 0.321 (0.058) & \underline{0.440 (0.019)} & 0.360 (0.092) & 0.297 (0.037) & 0.271 (0.071)\\
ShapesAll (Bal. Acc. $\uparrow$) & 0.781 (0.008) & 0.769 (0.059) & 0.712 (0.040) & 0.769 (0.015) & \textbf{0.823 (0.016)}\\
StarLightCurves (Bal. Acc. $\uparrow$) & 0.947 (0.015) & 0.922 (0.031) & 0.946 (0.010) & 0.938 (0.008) & \underline{0.951 (0.017)}\\
SwedishLeaf (Bal. Acc. $\uparrow$) & 0.897 (0.011) & \underline{0.931 (0.059)} & 0.894 (0.008) & 0.914 (0.009) & \textbf{0.936 (0.008)}\\
Trace (Bal. Acc. $\uparrow$) & \textbf{1.000 (0.000)} & 0.722 (0.283) & 0.974 (0.009) & \underline{0.997 (0.006)} & 0.980 (0.010)\\
TwoLeadECG (Bal. Acc. $\uparrow$) & 0.853 (0.144) & \textbf{0.988 (0.021)} & 0.797 (0.077) & 0.897 (0.056) & \underline{0.963 (0.013)}\\
Wafer (Bal. Acc. $\uparrow$) & \textbf{0.987 (0.005)} & 0.940 (0.056) & \underline{0.986 (0.007)} & 0.952 (0.023) & 0.985 (0.005)\\
Worms (Bal. Acc. $\uparrow$) & 0.569 (0.097) & 0.577 (0.066) & 0.525 (0.114) & 0.554 (0.082) & \textbf{0.628 (0.046)}\\
\bottomrule
\end{tabular}
\caption{Panel (a): UCR classification, comparison block 1. Entries are mean (sample SD) over final-seed replicates; best and second-best displayed means are bold and underlined.}
\label{tab:posthoc-pointwise-results}
\label{tab:posthoc-pointwise-ucr}
\end{table*}
\begin{table*}[t]
\ContinuedFloat
\centering
\small
\setlength{\tabcolsep}{1mm}
\renewcommand{\arraystretch}{0.94}
\begin{tabular}{@{}lrrrrrr@{}}
\toprule
Dataset & ALPHABET & S5 & LRU & GRU & LSTM & Transformer\\
\midrule
ACSF1 (Bal. Acc. $\uparrow$) & \textbf{0.788 (0.020)} & 0.722 (0.102) & 0.680 (0.042) & 0.714 (0.050) & \underline{0.776 (0.023)} & 0.588 (0.081)\\
Adiac (Bal. Acc. $\uparrow$) & 0.536 (0.148) & 0.415 (0.062) & 0.276 (0.092) & 0.230 (0.036) & 0.275 (0.064) & \underline{0.563 (0.044)}\\
ArrowHead (Bal. Acc. $\uparrow$) & \textbf{0.630 (0.040)} & 0.392 (0.087) & 0.534 (0.066) & 0.524 (0.040) & 0.483 (0.065) & 0.494 (0.050)\\
BME (Bal. Acc. $\uparrow$) & \textbf{0.995 (0.006)} & 0.913 (0.055) & 0.884 (0.041) & \underline{0.983 (0.039)} & \textbf{0.995 (0.006)} & 0.907 (0.101)\\
CinCECGTorso (Bal. Acc. $\uparrow$) & \underline{0.674 (0.052)} & 0.262 (0.014) & 0.546 (0.144) & 0.571 (0.044) & 0.352 (0.017) & \textbf{0.837 (0.040)}\\
Coffee (Bal. Acc. $\uparrow$) & \textbf{1.000 (0.000)} & 0.824 (0.147) & 0.933 (0.061) & 0.960 (0.015) & 0.836 (0.213) & \underline{0.967 (0.024)}\\
CricketX (Bal. Acc. $\uparrow$) & 0.683 (0.018) & 0.732 (0.017) & 0.605 (0.047) & 0.674 (0.023) & 0.690 (0.021) & 0.539 (0.028)\\
Crop (Bal. Acc. $\uparrow$) & 0.726 (0.006) & 0.709 (0.007) & \textbf{0.747 (0.014)} & 0.717 (0.008) & 0.709 (0.008) & 0.713 (0.008)\\
ECG200 (Bal. Acc. $\uparrow$) & 0.831 (0.037) & 0.813 (0.024) & 0.809 (0.044) & 0.751 (0.020) & 0.803 (0.041) & 0.822 (0.034)\\
ECG5000 (Bal. Acc. $\uparrow$) & 0.534 (0.010) & \underline{0.537 (0.015)} & 0.477 (0.043) & 0.530 (0.012) & 0.502 (0.039) & \textbf{0.550 (0.015)}\\
ECGFiveDays (Bal. Acc. $\uparrow$) & 0.830 (0.122) & 0.715 (0.076) & \underline{0.911 (0.085)} & 0.733 (0.011) & 0.677 (0.070) & 0.770 (0.048)\\
EOGHorizontalSignal (Bal. Acc. $\uparrow$) & 0.560 (0.030) & 0.542 (0.026) & 0.405 (0.053) & \underline{0.594 (0.026)} & 0.520 (0.028) & 0.509 (0.032)\\
Earthquakes (Bal. Acc. $\uparrow$) & 0.526 (0.035) & 0.500 (0.000) & 0.500 (0.000) & \textbf{0.548 (0.046)} & 0.500 (0.000) & \underline{0.544 (0.067)}\\
FordA (Bal. Acc. $\uparrow$) & \textbf{0.953 (0.005)} & 0.941 (0.011) & 0.935 (0.011) & 0.937 (0.007) & 0.918 (0.011) & 0.561 (0.022)\\
FordB (Bal. Acc. $\uparrow$) & \textbf{0.823 (0.017)} & 0.816 (0.017) & 0.804 (0.011) & 0.808 (0.019) & 0.780 (0.037) & 0.577 (0.026)\\
GunPoint (Bal. Acc. $\uparrow$) & \underline{0.982 (0.010)} & 0.924 (0.048) & 0.900 (0.097) & 0.952 (0.022) & 0.817 (0.130) & 0.929 (0.034)\\
InsectEPGRegularTrain (Bal. Acc. $\uparrow$) & \textbf{1.000 (0.000)} & \textbf{1.000 (0.000)} & \textbf{1.000 (0.000)} & \textbf{1.000 (0.000)} & \textbf{1.000 (0.000)} & \textbf{1.000 (0.000)}\\
InsectWingbeatSound (Bal. Acc. $\uparrow$) & 0.511 (0.014) & \underline{0.581 (0.008)} & 0.528 (0.059) & 0.503 (0.026) & 0.491 (0.026) & 0.499 (0.043)\\
ItalyPowerDemand (Bal. Acc. $\uparrow$) & \textbf{0.944 (0.018)} & 0.882 (0.059) & \underline{0.924 (0.041)} & 0.755 (0.165) & 0.869 (0.052) & \textbf{0.944 (0.028)}\\
MoteStrain (Bal. Acc. $\uparrow$) & 0.879 (0.026) & \textbf{0.897 (0.008)} & \underline{0.884 (0.024)} & 0.880 (0.024) & 0.839 (0.047) & 0.800 (0.021)\\
Plane (Bal. Acc. $\uparrow$) & \underline{0.996 (0.005)} & 0.960 (0.024) & 0.971 (0.015) & 0.974 (0.015) & 0.971 (0.035) & 0.940 (0.078)\\
PowerCons (Bal. Acc. $\uparrow$) & \underline{0.983 (0.012)} & 0.977 (0.005) & 0.889 (0.052) & 0.941 (0.010) & 0.936 (0.009) & \textbf{1.000 (0.000)}\\
Rock (Bal. Acc. $\uparrow$) & 0.321 (0.058) & 0.346 (0.060) & 0.275 (0.046) & 0.373 (0.041) & 0.368 (0.072) & \textbf{0.466 (0.006)}\\
ShapesAll (Bal. Acc. $\uparrow$) & 0.781 (0.008) & \underline{0.792 (0.016)} & 0.443 (0.309) & 0.766 (0.024) & 0.705 (0.024) & 0.667 (0.008)\\
StarLightCurves (Bal. Acc. $\uparrow$) & 0.947 (0.015) & \underline{0.951 (0.006)} & 0.935 (0.027) & \textbf{0.953 (0.008)} & 0.927 (0.015) & 0.900 (0.040)\\
SwedishLeaf (Bal. Acc. $\uparrow$) & 0.897 (0.011) & 0.905 (0.033) & 0.752 (0.184) & 0.863 (0.019) & 0.842 (0.021) & 0.828 (0.025)\\
Trace (Bal. Acc. $\uparrow$) & \textbf{1.000 (0.000)} & 0.935 (0.109) & 0.978 (0.018) & 0.957 (0.009) & 0.992 (0.013) & 0.842 (0.146)\\
TwoLeadECG (Bal. Acc. $\uparrow$) & 0.853 (0.144) & 0.849 (0.098) & 0.926 (0.102) & 0.961 (0.021) & 0.930 (0.063) & 0.667 (0.058)\\
Wafer (Bal. Acc. $\uparrow$) & \textbf{0.987 (0.005)} & 0.976 (0.006) & 0.982 (0.017) & 0.963 (0.006) & 0.951 (0.025) & 0.976 (0.012)\\
Worms (Bal. Acc. $\uparrow$) & 0.569 (0.097) & 0.554 (0.084) & 0.538 (0.085) & \underline{0.609 (0.082)} & 0.504 (0.044) & 0.377 (0.041)\\
\bottomrule
\end{tabular}
\caption{Panel (a), continued: UCR classification, comparison block 2. ALPHABET is repeated to keep both comparison blocks directly readable.}
\label{tab:posthoc-pointwise-results-continued}
\end{table*}
\begin{table*}[t]
\centering
\small
\setlength{\tabcolsep}{1mm}
\renewcommand{\arraystretch}{0.94}
\begin{tabular}{@{}lrrrrr@{}}
\toprule
Dataset & ALPHABET & CNN1D & TCN & Mamba & S4D\\
\midrule
PTB-XL (AUROC $\uparrow$) & \underline{0.921 (0.002)} & \underline{0.921 (0.003)} & \textbf{0.925 (0.002)} & 0.907 (0.002) & 0.913 (0.003)\\
\quad\emph{Selection: AUPRC $\uparrow$} & 0.810 (0.005) & 0.809 (0.005) & 0.817 (0.005) & 0.781 (0.004) & 0.796 (0.006)\\
MIT-BIH (Acc. $\uparrow$) & \underline{0.873 (0.030)} & \textbf{0.886 (0.009)} & 0.817 (0.065) & 0.871 (0.025) & 0.863 (0.048)\\
\quad\emph{Selection: Bal. Acc. $\uparrow$} & 0.337 (0.017) & 0.268 (0.034) & 0.442 (0.063) & 0.358 (0.050) & 0.372 (0.039)\\
CWRU (Acc. $\uparrow$) & \textbf{0.978 (0.028)} & 0.974 (0.031) & 0.936 (0.038) & 0.974 (0.015) & 0.941 (0.023)\\
\quad\emph{Selection: Bal. Acc. $\uparrow$} & 0.974 (0.033) & 0.969 (0.036) & 0.925 (0.046) & 0.970 (0.018) & 0.940 (0.017)\\
ETTm1 (MSE $\downarrow$) & \underline{0.576 (0.023)} & 0.871 (0.095) & 0.780 (0.031) & 0.811 (0.067) & 0.620 (0.027)\\
ETTm2 (MSE $\downarrow$) & 0.618 (0.200) & 0.600 (0.095) & 0.542 (0.085) & 0.592 (0.215) & 0.596 (0.087)\\
Electricity (MSE $\downarrow$) & 0.299 (0.007) & 0.295 (0.006) & 0.312 (0.010) & 0.322 (0.005) & 0.336 (0.013)\\
Weather (MSE $\downarrow$) & \underline{0.196 (0.010)} & 0.272 (0.009) & 0.291 (0.020) & 0.235 (0.018) & 0.222 (0.007)\\
Sequential CIFAR (Acc. $\uparrow$) & 0.601 (0.007) & \textbf{0.666 (0.014)} & 0.633 (0.004) & 0.613 (0.007) & \underline{0.653 (0.008)}\\
\quad\emph{Selection: Bal. Acc. $\uparrow$} & 0.601 (0.007) & 0.666 (0.014) & 0.633 (0.004) & 0.613 (0.007) & 0.653 (0.008)\\
AudioSet (AUPRC $\uparrow$) & 0.114 (0.002) & 0.141 (0.001) & 0.118 (0.003) & 0.125 (0.001) & 0.141 (0.000)\\
PhysioNet-2012 (AUROC $\uparrow$) & 0.783 (0.014) & 0.773 (0.005) & 0.773 (0.016) & 0.769 (0.028) & 0.805 (0.003)\\
\quad\emph{Selection: AUPRC $\uparrow$} & 0.382 (0.028) & 0.394 (0.015) & 0.384 (0.019) & 0.376 (0.027) & 0.432 (0.009)\\
PhysioNet-2019 (AUROC $\uparrow$) & 0.889 (0.009) & 0.890 (0.008) & \underline{0.894 (0.009)} & 0.839 (0.018) & 0.890 (0.009)\\
\quad\emph{Selection: AUPRC $\uparrow$} & 0.604 (0.015) & 0.494 (0.018) & 0.506 (0.014) & 0.507 (0.012) & 0.639 (0.012)\\
PAM (PAMAP2) (Acc. $\uparrow$) & 0.969 (0.011) & \textbf{0.980 (0.001)} & \underline{0.979 (0.006)} & 0.977 (0.003) & 0.973 (0.002)\\
\quad\emph{Selection: Macro-F1 $\uparrow$} & 0.972 (0.010) & 0.983 (0.001) & 0.981 (0.006) & 0.979 (0.003) & 0.977 (0.002)\\
Human Activity (Bal. Acc. $\uparrow$) & 0.682 (0.016) & 0.612 (0.033) & 0.643 (0.013) & 0.646 (0.027) & \textbf{0.768 (0.018)}\\
USHCN-Daily (MSE $\downarrow$) & 0.375 (0.006) & 0.374 (0.005) & 0.379 (0.005) & 0.384 (0.012) & \underline{0.265 (0.084)}\\
ISRUC-Sleep (Bal. Acc. $\uparrow$) & 0.699 (0.006) & 0.557 (0.024) & 0.693 (0.006) & 0.655 (0.009) & \textbf{0.708 (0.013)}\\
Chapman-Shaoxing (Bal. Acc. $\uparrow$) & 0.422 (0.025) & 0.321 (0.003) & 0.384 (0.008) & 0.398 (0.009) & \underline{0.429 (0.004)}\\
ETTh1 (MSE $\downarrow$) & 1.041 (0.058) & \underline{1.005 (0.019)} & 1.695 (0.103) & 1.540 (0.129) & 1.200 (0.038)\\
ETTh2 (MSE $\downarrow$) & 2.515 (0.129) & 3.732 (0.089) & 3.668 (1.093) & 2.895 (0.297) & \textbf{1.715 (0.103)}\\
Traffic (MSE $\downarrow$) & \textbf{0.676 (0.010)} & 0.943 (0.029) & 0.827 (0.071) & 0.772 (0.036) & 0.851 (0.038)\\
ILI (MSE $\downarrow$) & 5.693 (0.764) & 8.861 (1.057) & 7.027 (0.655) & 6.529 (1.060) & \underline{5.096 (0.122)}\\
Exchange-Rate (MSE $\downarrow$) & 1.299 (0.429) & 0.976 (0.222) & 1.163 (0.176) & 1.949 (0.286) & \textbf{0.756 (0.140)}\\
\bottomrule
\end{tabular}
\caption{Panel (b): general sequence, ECG, irregular clinical/activity, and forecasting tasks, comparison block 1. Entries are mean (sample SD) over final-seed replicates; best and second-best displayed means are bold and underlined.}
\label{tab:posthoc-pointwise-external}
\end{table*}
\begin{table*}[t]
\ContinuedFloat
\centering
\small
\setlength{\tabcolsep}{1mm}
\renewcommand{\arraystretch}{0.94}
\begin{tabular}{@{}lrrrrrr@{}}
\toprule
Dataset & ALPHABET & S5 & LRU & GRU & LSTM & Transformer\\
\midrule
PTB-XL (AUROC $\uparrow$) & \underline{0.921 (0.002)} & 0.915 (0.002) & 0.914 (0.004) & 0.917 (0.003) & 0.908 (0.004) & 0.882 (0.003)\\
\quad\emph{Selection: AUPRC $\uparrow$} & 0.810 (0.005) & 0.798 (0.005) & 0.793 (0.012) & 0.801 (0.006) & 0.787 (0.009) & 0.737 (0.008)\\
MIT-BIH (Acc. $\uparrow$) & \underline{0.873 (0.030)} & 0.797 (0.089) & \textbf{0.886 (0.028)} & 0.848 (0.039) & 0.840 (0.026) & 0.816 (0.093)\\
\quad\emph{Selection: Bal. Acc. $\uparrow$} & 0.337 (0.017) & 0.368 (0.029) & 0.293 (0.032) & 0.372 (0.019) & 0.338 (0.023) & 0.385 (0.054)\\
CWRU (Acc. $\uparrow$) & \textbf{0.978 (0.028)} & 0.939 (0.013) & 0.906 (0.047) & \underline{0.975 (0.008)} & 0.964 (0.005) & 0.839 (0.012)\\
\quad\emph{Selection: Bal. Acc. $\uparrow$} & 0.974 (0.033) & 0.933 (0.011) & 0.902 (0.048) & 0.974 (0.010) & 0.962 (0.005) & 0.839 (0.022)\\
ETTm1 (MSE $\downarrow$) & \underline{0.576 (0.023)} & 0.634 (0.030) & 0.577 (0.027) & 0.691 (0.055) & 0.778 (0.035) & \textbf{0.555 (0.029)}\\
ETTm2 (MSE $\downarrow$) & 0.618 (0.200) & \underline{0.493 (0.107)} & 0.570 (0.220) & 0.525 (0.074) & 0.640 (0.069) & \textbf{0.400 (0.046)}\\
Electricity (MSE $\downarrow$) & 0.299 (0.007) & 0.377 (0.011) & 0.320 (0.016) & \underline{0.287 (0.010)} & 0.292 (0.007) & \textbf{0.280 (0.006)}\\
Weather (MSE $\downarrow$) & \underline{0.196 (0.010)} & 0.236 (0.015) & 0.245 (0.016) & 0.224 (0.009) & 0.275 (0.023) & \textbf{0.169 (0.007)}\\
Sequential CIFAR (Acc. $\uparrow$) & 0.601 (0.007) & 0.636 (0.007) & 0.620 (0.005) & 0.632 (0.007) & 0.617 (0.009) & 0.585 (0.003)\\
\quad\emph{Selection: Bal. Acc. $\uparrow$} & 0.601 (0.007) & 0.636 (0.007) & 0.620 (0.005) & 0.632 (0.007) & 0.617 (0.009) & 0.585 (0.003)\\
AudioSet (AUPRC $\uparrow$) & 0.114 (0.002) & \underline{0.142 (0.003)} & 0.133 (0.003) & 0.129 (0.001) & 0.134 (0.000) & \textbf{0.151 (0.001)}\\
PhysioNet-2012 (AUROC $\uparrow$) & 0.783 (0.014) & \textbf{0.822 (0.003)} & 0.769 (0.004) & \underline{0.815 (0.009)} & 0.803 (0.003) & 0.778 (0.009)\\
\quad\emph{Selection: AUPRC $\uparrow$} & 0.382 (0.028) & 0.463 (0.004) & 0.380 (0.017) & 0.439 (0.030) & 0.398 (0.012) & 0.375 (0.018)\\
PhysioNet-2019 (AUROC $\uparrow$) & 0.889 (0.009) & \textbf{0.911 (0.008)} & 0.877 (0.020) & 0.850 (0.031) & 0.890 (0.008) & 0.852 (0.018)\\
\quad\emph{Selection: AUPRC $\uparrow$} & 0.604 (0.015) & 0.646 (0.013) & 0.596 (0.013) & 0.539 (0.030) & 0.585 (0.020) & 0.599 (0.002)\\
PAM (PAMAP2) (Acc. $\uparrow$) & 0.969 (0.011) & 0.967 (0.003) & 0.953 (0.012) & 0.951 (0.011) & 0.970 (0.009) & 0.944 (0.012)\\
\quad\emph{Selection: Macro-F1 $\uparrow$} & 0.972 (0.010) & 0.970 (0.004) & 0.960 (0.009) & 0.956 (0.010) & 0.973 (0.008) & 0.952 (0.012)\\
Human Activity (Bal. Acc. $\uparrow$) & 0.682 (0.016) & 0.731 (0.019) & 0.652 (0.028) & 0.698 (0.021) & 0.753 (0.046) & \underline{0.758 (0.046)}\\
USHCN-Daily (MSE $\downarrow$) & 0.375 (0.006) & 0.372 (0.033) & 0.379 (0.009) & 0.277 (0.082) & \textbf{0.242 (0.041)} & 0.382 (0.009)\\
ISRUC-Sleep (Bal. Acc. $\uparrow$) & 0.699 (0.006) & 0.653 (0.010) & 0.686 (0.009) & \underline{0.701 (0.007)} & 0.651 (0.023) & 0.524 (0.017)\\
Chapman-Shaoxing (Bal. Acc. $\uparrow$) & 0.422 (0.025) & \textbf{0.434 (0.021)} & 0.356 (0.013) & 0.386 (0.013) & 0.379 (0.008) & 0.284 (0.015)\\
ETTh1 (MSE $\downarrow$) & 1.041 (0.058) & 1.120 (0.262) & 1.158 (0.139) & 1.099 (0.110) & 1.196 (0.002) & \textbf{0.916 (0.187)}\\
ETTh2 (MSE $\downarrow$) & 2.515 (0.129) & 1.831 (0.045) & 2.189 (0.221) & 2.270 (0.219) & 2.423 (0.342) & \underline{1.720 (0.222)}\\
Traffic (MSE $\downarrow$) & \textbf{0.676 (0.010)} & 0.945 (0.096) & 0.846 (0.056) & 0.755 (0.003) & \underline{0.713 (0.016)} & 0.757 (0.023)\\
ILI (MSE $\downarrow$) & 5.693 (0.764) & 5.242 (0.302) & \textbf{5.028 (0.395)} & 5.831 (0.227) & 6.463 (0.237) & 5.114 (0.164)\\
Exchange-Rate (MSE $\downarrow$) & 1.299 (0.429) & 1.044 (0.297) & 1.156 (0.148) & 1.062 (0.161) & 1.096 (0.066) & \underline{0.899 (0.127)}\\
\bottomrule
\end{tabular}
\caption{Panel (b), continued: general sequence, ECG, irregular clinical/activity, and forecasting tasks, comparison block 2. ALPHABET is repeated to keep both comparison blocks directly readable.}
\label{tab:posthoc-pointwise-external-continued}
\end{table*}
\begin{table*}[t]
\centering
\small
\setlength{\tabcolsep}{1mm}
\renewcommand{\arraystretch}{0.94}
\begin{tabular}{@{}lrrrrr@{}}
\toprule
Dataset & ALPHABET & CNN1D & TCN & Mamba & S4D\\
\midrule
ArticularyWordRecognition (Bal. Acc. $\uparrow$) & 0.971 (0.008) & 0.956 (0.002) & 0.852 (0.036) & \textbf{0.978 (0.002)} & \underline{0.974 (0.005)}\\
AtrialFibrillation (Bal. Acc. $\uparrow$) & 0.267 (0.115) & 0.244 (0.077) & 0.289 (0.038) & 0.289 (0.038) & \textbf{0.400 (0.000)}\\
BasicMotions (Bal. Acc. $\uparrow$) & \textbf{1.000 (0.000)} & 0.967 (0.014) & \underline{0.992 (0.014)} & \textbf{1.000 (0.000)} & \textbf{1.000 (0.000)}\\
CharacterTrajectories (Bal. Acc. $\uparrow$) & 0.988 (0.005) & 0.950 (0.019) & 0.988 (0.008) & 0.987 (0.001) & \textbf{0.995 (0.001)}\\
Cricket (Bal. Acc. $\uparrow$) & \textbf{0.986 (0.000)} & \textbf{0.986 (0.000)} & 0.921 (0.008) & \textbf{0.986 (0.000)} & \underline{0.977 (0.008)}\\
DuckDuckGeese (Bal. Acc. $\uparrow$) & \underline{0.593 (0.101)} & 0.507 (0.095) & 0.300 (0.053) & 0.327 (0.095) & 0.447 (0.064)\\
EigenWorms (Bal. Acc. $\uparrow$) & \textbf{0.837 (0.027)} & 0.369 (0.055) & 0.582 (0.111) & \underline{0.793 (0.043)} & 0.755 (0.006)\\
Epilepsy (Bal. Acc. $\uparrow$) & \textbf{0.977 (0.001)} & \underline{0.962 (0.004)} & 0.939 (0.012) & 0.958 (0.000) & 0.945 (0.004)\\
ERing (Bal. Acc. $\uparrow$) & \textbf{0.862 (0.032)} & 0.806 (0.013) & 0.836 (0.028) & 0.851 (0.019) & \underline{0.859 (0.026)}\\
EthanolConcentration (Bal. Acc. $\uparrow$) & \textbf{0.289 (0.022)} & 0.271 (0.015) & 0.276 (0.017) & 0.247 (0.008) & 0.265 (0.013)\\
FaceDetection (Bal. Acc. $\uparrow$) & 0.630 (0.006) & 0.546 (0.005) & 0.550 (0.013) & 0.566 (0.008) & 0.611 (0.024)\\
FingerMovements (Bal. Acc. $\uparrow$) & \textbf{0.521 (0.013)} & 0.493 (0.028) & 0.470 (0.047) & \underline{0.507 (0.044)} & 0.474 (0.023)\\
HandMovementDirection (Bal. Acc. $\uparrow$) & 0.240 (0.014) & 0.278 (0.048) & 0.295 (0.063) & 0.286 (0.066) & \underline{0.352 (0.034)}\\
Handwriting (Bal. Acc. $\uparrow$) & 0.304 (0.021) & 0.288 (0.021) & 0.225 (0.065) & 0.310 (0.029) & 0.269 (0.008)\\
Heartbeat (Bal. Acc. $\uparrow$) & 0.635 (0.033) & 0.623 (0.030) & 0.631 (0.007) & 0.609 (0.051) & \textbf{0.660 (0.016)}\\
InsectWingbeat (Bal. Acc. $\uparrow$) & 0.602 (0.004) & 0.583 (0.005) & 0.552 (0.010) & 0.613 (0.007) & \textbf{0.640 (0.002)}\\
JapaneseVowels (Bal. Acc. $\uparrow$) & \textbf{0.984 (0.000)} & 0.982 (0.007) & 0.928 (0.009) & 0.977 (0.002) & \underline{0.983 (0.004)}\\
Libras (Bal. Acc. $\uparrow$) & \textbf{0.898 (0.026)} & \underline{0.887 (0.029)} & 0.617 (0.091) & 0.791 (0.059) & 0.831 (0.014)\\
LSST (Bal. Acc. $\uparrow$) & 0.442 (0.010) & 0.300 (0.023) & 0.482 (0.026) & \textbf{0.496 (0.013)} & \underline{0.491 (0.035)}\\
MotorImagery (Bal. Acc. $\uparrow$) & 0.547 (0.015) & 0.577 (0.015) & 0.497 (0.040) & 0.543 (0.025) & \textbf{0.593 (0.035)}\\
NATOPS (Bal. Acc. $\uparrow$) & \underline{0.950 (0.024)} & \textbf{0.952 (0.008)} & 0.863 (0.027) & 0.913 (0.050) & 0.889 (0.020)\\
PEMS-SF (Bal. Acc. $\uparrow$) & \underline{0.719 (0.004)} & 0.715 (0.045) & 0.702 (0.042) & 0.687 (0.029) & 0.683 (0.011)\\
PenDigits (Bal. Acc. $\uparrow$) & 0.978 (0.006) & \textbf{0.988 (0.003)} & 0.982 (0.002) & 0.978 (0.007) & 0.984 (0.003)\\
RacketSports (Bal. Acc. $\uparrow$) & 0.861 (0.011) & 0.841 (0.034) & 0.823 (0.020) & 0.848 (0.051) & \underline{0.918 (0.009)}\\
SelfRegulationSCP1 (Bal. Acc. $\uparrow$) & 0.731 (0.018) & 0.777 (0.019) & 0.772 (0.007) & 0.725 (0.041) & 0.769 (0.024)\\
SelfRegulationSCP2 (Bal. Acc. $\uparrow$) & 0.494 (0.024) & 0.463 (0.055) & 0.504 (0.029) & \underline{0.524 (0.022)} & 0.491 (0.018)\\
SpokenArabicDigits (Bal. Acc. $\uparrow$) & 0.987 (0.001) & 0.982 (0.003) & 0.981 (0.006) & 0.985 (0.003) & \textbf{0.992 (0.002)}\\
StandWalkJump (Bal. Acc. $\uparrow$) & \underline{0.333 (0.000)} & \underline{0.333 (0.000)} & 0.289 (0.077) & 0.267 (0.176) & \underline{0.333 (0.067)}\\
UWaveGestureLibrary (Bal. Acc. $\uparrow$) & 0.848 (0.015) & 0.801 (0.010) & 0.598 (0.010) & 0.845 (0.041) & \underline{0.861 (0.010)}\\
MFPT Bearing (Bal. Acc. $\uparrow$) & 0.733 (0.149) & 0.867 (0.098) & 0.781 (0.134) & 0.706 (0.088) & \textbf{0.984 (0.035)}\\
Paderborn (KAT) (Bal. Acc. $\uparrow$) & \textbf{0.612 (0.021)} & 0.419 (0.086) & 0.466 (0.051) & \underline{0.611 (0.048)} & 0.567 (0.122)\\
\bottomrule
\end{tabular}
\caption{Panel (c): UEA multivariate classification and vibration/fault diagnosis, comparison block 1. Entries are mean (sample SD) over final-seed replicates; best and second-best displayed means are bold and underlined.}
\label{tab:posthoc-pointwise-uea-fault}
\end{table*}
\begin{table*}[t]
\ContinuedFloat
\centering
\small
\setlength{\tabcolsep}{1mm}
\renewcommand{\arraystretch}{0.94}
\begin{tabular}{@{}lrrrrrr@{}}
\toprule
Dataset & ALPHABET & S5 & LRU & GRU & LSTM & Transformer\\
\midrule
ArticularyWordRecognition (Bal. Acc. $\uparrow$) & 0.971 (0.008) & 0.972 (0.008) & \textbf{0.978 (0.002)} & 0.972 (0.002) & 0.967 (0.003) & 0.923 (0.023)\\
AtrialFibrillation (Bal. Acc. $\uparrow$) & 0.267 (0.115) & \underline{0.378 (0.102)} & 0.356 (0.077) & \underline{0.378 (0.038)} & 0.222 (0.102) & 0.267 (0.115)\\
BasicMotions (Bal. Acc. $\uparrow$) & \textbf{1.000 (0.000)} & 0.975 (0.000) & \textbf{1.000 (0.000)} & 0.967 (0.014) & 0.958 (0.014) & 0.908 (0.038)\\
CharacterTrajectories (Bal. Acc. $\uparrow$) & 0.988 (0.005) & \textbf{0.995 (0.001)} & 0.982 (0.001) & 0.982 (0.002) & \underline{0.989 (0.004)} & 0.959 (0.016)\\
Cricket (Bal. Acc. $\uparrow$) & \textbf{0.986 (0.000)} & 0.968 (0.016) & 0.972 (0.000) & 0.954 (0.021) & 0.972 (0.014) & 0.903 (0.073)\\
DuckDuckGeese (Bal. Acc. $\uparrow$) & \underline{0.593 (0.101)} & \textbf{0.660 (0.020)} & 0.573 (0.117) & 0.487 (0.050) & 0.447 (0.042) & 0.507 (0.050)\\
EigenWorms (Bal. Acc. $\uparrow$) & \textbf{0.837 (0.027)} & 0.759 (0.070) & 0.652 (0.120) & 0.667 (0.181) & 0.747 (0.056) & 0.307 (0.186)\\
Epilepsy (Bal. Acc. $\uparrow$) & \textbf{0.977 (0.001)} & 0.936 (0.019) & 0.954 (0.017) & 0.959 (0.015) & 0.880 (0.018) & 0.875 (0.053)\\
ERing (Bal. Acc. $\uparrow$) & \textbf{0.862 (0.032)} & 0.856 (0.000) & 0.833 (0.020) & 0.767 (0.006) & 0.815 (0.011) & 0.819 (0.010)\\
EthanolConcentration (Bal. Acc. $\uparrow$) & \textbf{0.289 (0.022)} & 0.270 (0.011) & 0.276 (0.002) & 0.279 (0.012) & \underline{0.284 (0.029)} & \underline{0.284 (0.027)}\\
FaceDetection (Bal. Acc. $\uparrow$) & 0.630 (0.006) & \underline{0.639 (0.004)} & 0.607 (0.053) & 0.559 (0.010) & 0.570 (0.006) & \textbf{0.683 (0.014)}\\
FingerMovements (Bal. Acc. $\uparrow$) & \textbf{0.521 (0.013)} & 0.477 (0.045) & \textbf{0.521 (0.026)} & 0.492 (0.010) & 0.504 (0.026) & 0.479 (0.018)\\
HandMovementDirection (Bal. Acc. $\uparrow$) & 0.240 (0.014) & \textbf{0.353 (0.072)} & 0.267 (0.045) & 0.267 (0.057) & 0.277 (0.013) & 0.285 (0.034)\\
Handwriting (Bal. Acc. $\uparrow$) & 0.304 (0.021) & \underline{0.340 (0.018)} & 0.118 (0.042) & 0.309 (0.023) & \textbf{0.369 (0.026)} & 0.102 (0.008)\\
Heartbeat (Bal. Acc. $\uparrow$) & 0.635 (0.033) & \underline{0.650 (0.015)} & 0.649 (0.013) & 0.647 (0.017) & 0.635 (0.009) & 0.619 (0.022)\\
InsectWingbeat (Bal. Acc. $\uparrow$) & 0.602 (0.004) & \underline{0.633 (0.010)} & 0.592 (0.005) & 0.627 (0.001) & 0.629 (0.001) & 0.598 (0.003)\\
JapaneseVowels (Bal. Acc. $\uparrow$) & \textbf{0.984 (0.000)} & 0.978 (0.002) & 0.968 (0.003) & 0.971 (0.010) & 0.965 (0.004) & 0.978 (0.003)\\
Libras (Bal. Acc. $\uparrow$) & \textbf{0.898 (0.026)} & 0.778 (0.006) & 0.744 (0.019) & 0.772 (0.040) & 0.741 (0.037) & 0.706 (0.063)\\
LSST (Bal. Acc. $\uparrow$) & 0.442 (0.010) & 0.486 (0.018) & 0.474 (0.005) & 0.463 (0.017) & 0.474 (0.020) & 0.421 (0.025)\\
MotorImagery (Bal. Acc. $\uparrow$) & 0.547 (0.015) & 0.573 (0.042) & 0.577 (0.035) & \underline{0.590 (0.030)} & 0.557 (0.032) & 0.567 (0.070)\\
NATOPS (Bal. Acc. $\uparrow$) & \underline{0.950 (0.024)} & 0.859 (0.006) & 0.902 (0.031) & 0.874 (0.042) & 0.878 (0.044) & 0.828 (0.031)\\
PEMS-SF (Bal. Acc. $\uparrow$) & \underline{0.719 (0.004)} & 0.709 (0.028) & \textbf{0.725 (0.044)} & 0.690 (0.003) & 0.694 (0.020) & 0.688 (0.038)\\
PenDigits (Bal. Acc. $\uparrow$) & 0.978 (0.006) & 0.978 (0.005) & 0.985 (0.003) & \textbf{0.988 (0.003)} & \underline{0.987 (0.002)} & 0.971 (0.003)\\
RacketSports (Bal. Acc. $\uparrow$) & 0.861 (0.011) & \textbf{0.925 (0.003)} & 0.884 (0.039) & 0.847 (0.006) & 0.869 (0.025) & 0.728 (0.030)\\
SelfRegulationSCP1 (Bal. Acc. $\uparrow$) & 0.731 (0.018) & 0.774 (0.010) & 0.770 (0.002) & 0.789 (0.045) & \underline{0.801 (0.019)} & \textbf{0.815 (0.054)}\\
SelfRegulationSCP2 (Bal. Acc. $\uparrow$) & 0.494 (0.024) & 0.502 (0.032) & 0.509 (0.033) & 0.483 (0.035) & \textbf{0.530 (0.062)} & 0.494 (0.048)\\
SpokenArabicDigits (Bal. Acc. $\uparrow$) & 0.987 (0.001) & 0.987 (0.003) & \underline{0.988 (0.002)} & 0.975 (0.004) & 0.979 (0.003) & 0.979 (0.007)\\
StandWalkJump (Bal. Acc. $\uparrow$) & \underline{0.333 (0.000)} & 0.289 (0.038) & \textbf{0.422 (0.139)} & 0.311 (0.038) & \underline{0.333 (0.067)} & 0.289 (0.102)\\
UWaveGestureLibrary (Bal. Acc. $\uparrow$) & 0.848 (0.015) & \textbf{0.884 (0.011)} & 0.653 (0.151) & 0.820 (0.024) & 0.818 (0.008) & 0.728 (0.019)\\
MFPT Bearing (Bal. Acc. $\uparrow$) & 0.733 (0.149) & \underline{0.953 (0.095)} & 0.875 (0.086) & 0.729 (0.090) & 0.704 (0.246) & 0.671 (0.020)\\
Paderborn (KAT) (Bal. Acc. $\uparrow$) & \textbf{0.612 (0.021)} & 0.369 (0.080) & 0.507 (0.020) & 0.316 (0.088) & 0.304 (0.125) & 0.565 (0.112)\\
\bottomrule
\end{tabular}
\caption{Panel (c), continued: UEA multivariate classification and vibration/fault diagnosis, comparison block 2. ALPHABET is repeated to keep both comparison blocks directly readable.}
\label{tab:posthoc-pointwise-uea-fault-continued}
\end{table*}

\subsection{Configuration freezing and final evaluation}

The control-family candidates are listed in Table~\ref{tab:model-config-ledger}, and the proposed \model{} capacity schedule is specified in the Evaluation Protocol. Each task--family cell contains 18 candidate configurations. In Stage~1, all 18 candidates are evaluated at seed 7, and the six configurations with the highest validation scores are retained. In Stage~2, these six configurations are evaluated at seeds 11 and 19, and the configuration with the highest mean validation score across seeds \(\{7,11,19\}\) is frozen for final training. Exact ties are resolved using the predeclared configuration key.

Regular-sequence tasks are trained for at most 100 epochs, with validation checkpointing and an early-stopping patience of eight epochs.

For each task--family cell, the selection and final-evaluation protocol requires 18 Stage~1 fits, 12 additional Stage~2 fits, and five final-training fits. Across 82 tasks and ten model families, this yields a total of 28,700 fits.

Within each benchmark suite, all model families use the same epoch limit, validation split, checkpointing rule, and selection metric. Neither configuration-selection stage accesses the TEST data, and each campaign manifest fixes the final-training seed set before TEST evaluation.

For UCR datasets, the selected configuration is retrained on the full official TRAIN set for the rounded median of its validation-selected epoch counts. For the remaining datasets, final evaluation uses the predefined TRAIN split and restores the validation-loss checkpoint obtained under the frozen configuration. Only the configuration selected by the two-stage validation procedure contributes to the reported TEST result; results from nonselected configurations do not affect TEST reporting or cross-model ranking.

\subsection{Final metrics and PhysioNet handling}

The metric contract is fixed per dataset in the registry. UCR, UEA,
fault-diagnosis, Human Activity, ISRUC-Sleep, and Chapman-Shaoxing rows report
balanced accuracy; forecasting rows report MSE. AudioSet reports macro AUPRC.
The remaining standard multiclass rows report accuracy, and the PTB-XL and
PhysioNet rows report macro AUROC or AUROC, respectively. Every row in
Table~\ref{tab:posthoc-pointwise-results} names its metric and direction.
  
Where selection and reporting differ, validation selection uses the
imbalance-sensitive metric fixed before TEST access---balanced accuracy for
standard multiclass tasks, macro AUPRC/AUPRC for multilabel or binary clinical
tasks, and macro-F1 for multiclass PAMAP2---while the primary row retains the
dataset's conventional accuracy or AUROC endpoint for comparison with prior
work. The indented companion rows in
Table~\ref{tab:posthoc-pointwise-results} report the corresponding final-TEST
selection metric for the same frozen configurations and seeds; these rows are
not used for reselection or cross-task ranking. For \(K\) classes, balanced
accuracy is \(K^{-1}\sum_{k=1}^K\mathrm{TP}_k/
(\mathrm{TP}_k+\mathrm{FN}_k)\). Macro AUROC/AUPRC average one-vs-rest label
scores without prevalence weighting, and MSE averages squared error over the
reported forecast targets.

\subsection{Dataset-specific protocol: PhysioNet 2012}
\label{app:causal-clinical-results}

The PhysioNet 2012 endpoint uses ICU mortality records
\cite{silva2012physionet,goldberger2000physionet}. Original minute timestamp
groups are retained without binning or forward filling. Inputs combine
standardized values, observation indicators, elapsed time, and timestep
validity; \model{} receives the last two as metadata and controls as channels.

Set A is split into 3,200 optimization and 800 validation records; Set C remains
sealed until configuration freezing, and standardization uses optimization
records only. The preselection-frozen Set-C parser ignores nameless
measurements, rejects non-empty unknown variables, and right-truncates beyond
the Set-A-selected 208-step geometry (Set-A maximum: 203 groups). AUROC and
AUPRC are threshold-free; balanced accuracy uses the Set-A-validation threshold.

\subsection{Completeness, ranking, and statistical scope}

Aggregation requires every key in each campaign's expected-key manifest.
Per-task ranks follow the stated metric direction; values satisfying
\texttt{isclose} with \(\mathrm{rtol}=10^{-5}\), \(\mathrm{atol}=10^{-8}\)
share average rank. Registry summaries average ranks, not heterogeneous raw
metrics. Friedman tests use tasks as units; pairwise signed-rank tests operate
on taskwise rank differences, retain zeros by Pratt's convention, and use Holm
correction \cite{demsar2006statistical}.